\documentclass{article}
\usepackage{microtype}
\usepackage{graphicx}
\usepackage{booktabs} 
\usepackage{hyperref}

\usepackage[accepted]{icml2026}

\usepackage{amsmath}
\usepackage{amssymb}
\usepackage{mathtools}
\usepackage{amsthm}

\definecolor{blue}{rgb}{0.2, 0.2, 0.9}
\definecolor{qdelta}{rgb}{0.91, 0.92, 0.97}
\definecolor{avg}{rgb}{0.88, 0.92, 0.99}
\definecolor{avgblue}{rgb}{0.93, 0.93, 0.935}
\definecolor{bluelightgray}{rgb}{0.4, 0.7, 0.8}
\definecolor{electriclime}{rgb}{0.8, 1.0, 0.0}
\definecolor{malachite}{rgb}{0.04, 0.85, 0.32}
\definecolor{darkred}{rgb}{0.55, 0.0, 0.0}
\definecolor{black}{rgb}{0.0, 0.0, 0}
\definecolor{darkblue}{rgb}{0.0, 0.0, 0.50}
\definecolor{darkgreen}{rgb}{0.0, 0.2, 0.13}
\definecolor{darkorchid}{rgb}{0.6, 0.2, 0.8}
\definecolor{neonskyblue}{rgb}{0.4,1,1}
\definecolor{neongreen}{rgb}{0.6,0,0.6}

\newcommand{\secondbest}[1]{\underline{#1}}
\usepackage[capitalize,noabbrev]{cleveref}

\usepackage{amsthm}
\usepackage{thmtools}
\usepackage{thm-restate}

\theoremstyle{plain}

\theoremstyle{definition}

\theoremstyle{remark}

\usepackage[textsize=tiny]{todonotes}

\usepackage{changepage} 
\usepackage{algorithm}
\usepackage{algorithmic}

\usepackage[utf8]{inputenc}
\usepackage{caption}
\usepackage{multirow}
\usepackage{subcaption}
\usepackage{pbox}
\usepackage{graphicx}
\usepackage{bm}
\usepackage{siunitx}
\usepackage{lipsum}
\usepackage{mathtools}
\usepackage[table]{xcolor}
\usepackage{adjustbox}
\usepackage{wrapfig}
\usepackage{stfloats}

\usepackage{array}
\usepackage{booktabs}
\usepackage{colortbl}
\newcolumntype{C}{>{\centering\arraybackslash}p{1.15cm}}

\usepackage{tabularx}
\usepackage{arydshln}
\usepackage{nicefrac}       
\usepackage{microtype}      
\usepackage{xcolor}         
\usepackage{paralist}
\usepackage[table]{xcolor}

\usepackage{newfloat}
\usepackage{listings}

\begin{document}

\twocolumn[
  \icmltitle{Q-Delta: Beyond Key–Value Associative State Evolution}

  \icmlsetsymbol{equal}{*}

  \begin{icmlauthorlist}
    \icmlauthor{Sumin Park}{kaist}
    \icmlauthor{Seojin Kim}{kaist}
    \icmlauthor{Noseong Park}{kaist}
  \end{icmlauthorlist}

  \icmlaffiliation{kaist}{Korea Advanced Institute of Science and Technology (KAIST), Daejeon, Republic of Korea}

  \icmlcorrespondingauthor{Noseong Park}{noseong@kaist.ac.kr}

  \icmlkeywords{Machine Learning, ICML}

  \vskip 0.3in
]

\printAffiliationsAndNotice{}  

\begin{abstract}
Linear attention reformulates sequence modeling as recurrent state evolution, enabling efficient linear-time inference. 
Under the key–value associative paradigm, existing approaches restrict the role of the query to the readout operation, decoupling it from state evolution. We show that query-conditioned state readout induces a structured value prediction over accumulated memory that complements key-based retrieval.
Based on this insight, we propose {Q-Delta}, a query-aware delta rule that integrates mixed key--query prediction errors into state evolution, enabling jointly corrective dynamics while preserving delta-rule efficiency. We establish stability guarantees for the resulting dynamics and derive a hardware-efficient chunkwise-parallel formulation with a custom Triton implementation. Empirical results demonstrate stable optimization, competitive throughput, and consistent improvements over strong baselines on language modeling and long-context retrieval tasks. Code is available at \url{https://github.com/psmiz/Q-Delta}.
\end{abstract}

\section{Introduction}

The Transformer architecture achieves strong sequence modeling performance with its softmax-based self-attention mechanism~\citep{vaswani2023attentionneed}, but incurs quadratic time and memory complexity with respect to sequence length. This limitation has motivated a line of work on linear Transformers, which replace softmax attention with kernelized or algebraically decomposable feature mappings $\phi(\cdot)$ that allow the attention computation to be algebraically reordered as $\phi(Q)\bigl(\phi(K)^\top V\bigr)$, enabling linear-time inference and training scalability~\citep{chevalier2018larnnlinearattentionrecurrent,wang2020linformerselfattentionlinearcomplexity, katharopoulos2020transformersrnnsfastautoregressive}.
This factorization enables an online realization in which the term $\phi(K)^\top V$ is maintained as an incrementally updated state,
$S_t = \sum_{i=1}^t v_i \phi(k_i)^\top$, revealing the attention as recurrent state evolution where information is written into a memory state by key–value outer products and retrieved via a query readout, $o_t = S_t \phi(q_t)$.

This perspective leads to a unifying interpretation of linear attention as querying an evolving key–value associative memory. 
Rather than modeling explicit pairwise interactions between tokens, linear attention emphasizes how information is incrementally written, stored, and retrieved from a shared memory structure through iterative state updates. Under this view, a range of prior works, including kernelized linear attention~\citep{kitaev2020reformerefficienttransformer, wang2020linformerselfattentionlinearcomplexity,sun2023retentivenetworksuccessortransformer,yang2024gatedlinearattentiontransformers, sun2024cacheoncedecoderdecoderarchitectures} and selective state space models (SSMs)~\citep{gu2020hipporecurrentmemoryoptimal,
smith2023simplifiedstatespacelayers,gu2024mambalineartimesequencemodeling,dao2024transformersssmsgeneralizedmodels}, can be viewed as linear RNN–style architectures that replace explicit attention maps with structured state evolution with recurrent update rules. 

Purely additive updates, however, lack mechanisms for adaptive memory modifications, {failing} to selectively revise or remove previously stored information. This results in increased key collisions and degraded retrieval accuracy as sequence length grows~\citep{pmlr-v139-schlag21a}. Delta-rule–based updates~\citep{liu2024longhornstatespacemodels,yang2025parallelizinglineartransformersdelta,yang2025gateddeltanetworksimproving} address this limitation by refining the state in response to retrieval error, the discrepancy between the observed value and the value retrieved by the current key. Longhorn~\citep{liu2024longhornstatespacemodels} reveals that this error-driven update reduces to an online regression step on the key–value prediction objective, enabling selective modification of the recurrent state while preserving linear-time recurrence~\citep{liu2024longhornstatespacemodels}. Recent linear transformers and SSMs adopt this perspective to reinterpret recurrent state evolution as amortized online learning, providing a principled foundation for improved in-context retrieval and memory control~\citep{brown2020languagemodelsfewshotlearners, olsson2022incontextlearninginductionheads}.

Despite these advances, existing linear RNN models share a common structural assumption: state evolution is governed primarily by key–value interactions, while the query is used only to read out the evolved state.
While this separation follows naturally from the original attention formulation, it implicitly assumes that query plays no informative role in shaping state dynamics. We question this conventional view of query as a passive readout mechanism by re-examining the role of query-based readout in recurrent state update process.
In this work, we show that querying the state yields a value prediction that reflects information stored across the accumulated memory trace, providing a distinct but complementary signal to key-based retrieval.

Motivated by this observation, we introduce \textbf{Q-Delta}, a query-aware delta rule that enables predictive state evolution by incorporating query-conditioned feedback directly into recurrent memory updates. 
Q-Delta jointly considers a key-retrieved value estimate $\hat{v}_t = S_{t-1} k_t$ and a query-conditioned value prediction $\hat{o}_t = S_{t-1} q_t$, and updates the memory using a mixed correction signal that couples these complementary value estimators. We show that the resulting dynamics remain stable and satisfy global geometric error contraction under mild empirical conditions, and we further derive a chunkwise-parallel formulation compatible with hardware-efficient training implemented in Triton kernel.

Our main contributions are summarized as follows:
\vspace{-0.5em}
\begin{itemize}
    \item We revisit the role of query readout in linear attention, showing that it induces a structured value prediction over accumulated memory.
    \item We propose {Q-Delta}, a query-aware delta rule that integrates mixed key--query prediction errors into state evolution, together with a hardware-efficient chunkwise-parallel Triton implementation.
    \item We establish a stability theory for Q-Delta, proving one-step contraction and global stability of the mixed key--query error under empirical alignment conditions.
    \item Empirically, Q-Delta consistently outperforms prior linear Transformers and SSMs baselines on language modeling and long-context retrieval tasks.
\end{itemize}

\begin{figure}[t]
    \centering
    \includegraphics[width=0.49\textwidth]{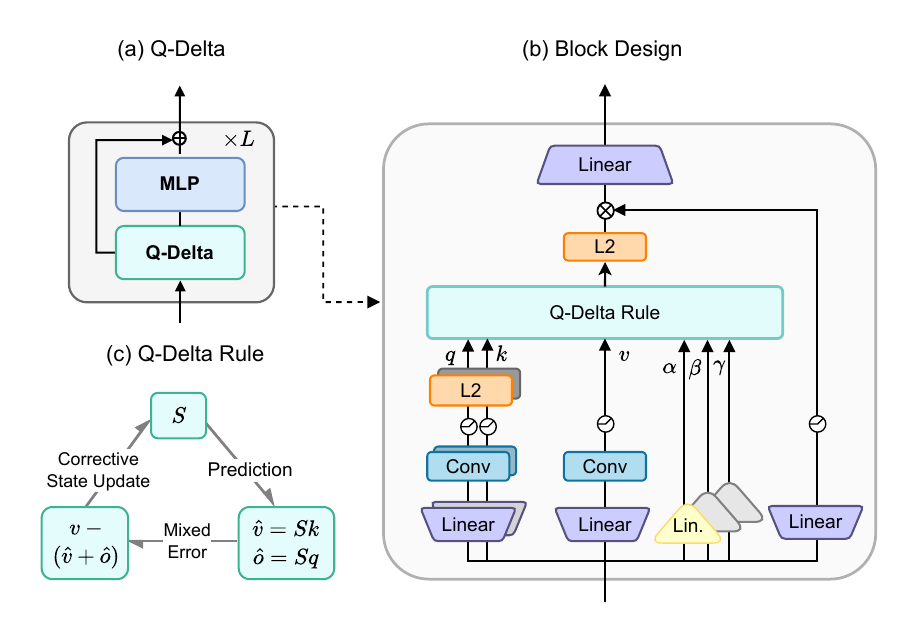}
    \caption{
    \textbf{Architecture overview and block design of Q-Delta.}
    (a) Q-Delta module within a Transformer block.
    (b) Block-level implementation illustrating how queries, keys, and values are projected and combined with gating signals.
    (c) The Q-Delta update rule, where the recurrent state produces a key-retrieved value $\hat{v}=Sk$ and a query-conditioned prediction $\hat{o}=Sq$, which are then combined into a mixed error for corrective state evolution.
    }
    \label{fig:architecture}
    \vspace{-1em}
\end{figure}

\vspace{-1em}
\section{Backgrounds}

\subsection{Linear Transformers}\label{sec:linear_transformers}

Recent work~\citep{katharopoulos2020transformersrnnsfastautoregressive, sun2023retentivenetworksuccessortransformer, yang2024gatedlinearattentiontransformers, liu2024longhornstatespacemodels, gu2024mambalineartimesequencemodeling, yang2025parallelizinglineartransformersdelta, yang2025gateddeltanetworksimproving} has shown that linear attention can be equivalently formulated as a linear recurrent model with a matrix-valued state. In its classic form, omitting normalization and feature activations, linear attention admits the recurrence
\begin{equation}
\label{eq:linear_attention}
S_t = S_{t-1} + v_t k_t^\top \in \mathbb{R}^{d_v \times d_k},
\quad
o_t = S_t q_t \in \mathbb{R}^{d_v},
\end{equation}
where $d_k$ and $d_v$ represent the (head) dimensions for $q_t, k_t \in \mathbb{R}^{d_k}$ and $v_t \in \mathbb{R}^{d_v}$ and $S_t$ accumulates rank one key--value outer products over time. Longhorn reframes this update rule as an online learning, interpreting the state update as the implicit solution of an online regression problem that learns a linear map from keys to values.~\cite{olsson2022incontextlearninginductionheads,liu2024longhornstatespacemodels} Under this view, designing a linear sequence-mixing model reduces to specifying an online loss and a regularizer that govern how new key--value information is incorporated into state~\citep{sun2025learninglearntesttime, yang2025parallelizinglineartransformersdelta, yang2024gatedlinearattentiontransformers, hu2025combaimprovingbilinearrnns}. This perspective provides a unified framework for understanding linear Transformers and their extensions as online linear regressors.

\paragraph{Delta-rule and gated extensions.}
While the additive update in Eq.~\eqref{eq:linear_attention} is efficient, it lacks a mechanism for selectively overwriting or correcting stored key-value associations. Delta-based models~\citep{liu2024longhornstatespacemodels, yang2025parallelizinglineartransformersdelta} address this limitation by modifying the state along the direction of the current key. The Deltanet~\cite{yang2025parallelizinglineartransformersdelta} updates the state as
\begin{equation}
\label{eq:delta_rule}
S_t = S_{t-1}(I - \beta_t k_t k_t^\top) + \beta_t v_t k_t^\top,
\end{equation}
where $\beta_t \in (0,1)$ controls the writing strength, dynamically erasing the old value $v_t^{\text{old}} = S_{t-1}k_t$ retrieved by $k_t$ and writing a new one $v_t^{\text{new}} = v_t$. GatedDeltaNet~\cite{yang2025gateddeltanetworksimproving} further augments this update with multiplicative gating, yielding recurrences in the form
\begin{equation}
\label{eq:gated_delta_recurrence}
S_t = S_{t-1}\bigl(\alpha_t(I - \beta_t k_t k_t^\top)\bigr) + \beta_t v_t k_t^\top,
\end{equation}
where $\alpha_t$ controls the state decay. Closely related decay-based formulations also arise in Mamba2~\citep{gu2024mambalineartimesequencemodeling, dao2024transformersssmsgeneralizedmodels}, whose SSM dynamics can be expressed as a linear recurrence with a decay term.

\paragraph{Explicit memory update via online regression.}
Beyond implicit memory encoded in recurrent state updates, a growing line of work treats memory as an explicit module that is continuously updated by online learning rules at inference time. Test-Time Training (TTT)~\citep{sun2025learninglearntesttime} optimizes the state via online gradient descent on a key-value prediction loss during both training and inference, 
\vspace{-0.5em}
\begin{equation}
S_t = S_{t-B} - \sum_{i=1}^B \eta_i \nabla_S \bigl\| S k_i - v_i \bigr\|^2.
\end{equation}
Similarly, Titans~\citep{behrouz2024titanslearningmemorizetest} introduce a neural long-term memory module whose parameters are updated at test time to memorize key--value associations, with decay and momentum controlling forgetting and retention.

\begin{table*}[t]
\centering
\scriptsize
\caption{Comparison of linear RNN models and their online learning objectives under the framework of~\citet{liu2024longhornstatespacemodels}.}
\label{tbl:online_objectives}
\renewcommand{\arraystretch}{1.8}
\setlength{\tabcolsep}{4pt}
\begin{tabular}{lll}
\toprule
\textbf{Method} & \textbf{Online Learning Objective} & \textbf{Online Update} \\
\midrule
LA
&
$\|S_t - S_{t-1}\|_F^2 - 2\langle S_t k_t,\, v_t\rangle$
&
$S_t = S_{t-1} + v_t k_t^\top$
\\
Mamba2
&
$\|S_t - \alpha_t S_{t-1}\|_F^2 - 2\langle S_t k_t,\, v_t\rangle$
&
$S_t = \alpha_t S_{t-1} + v_t k_t^\top$
\\
Longhorn
&
$\|S_t - S_{t-1}\|_F^2 - \beta_t \|S_t k_t - v_t\|_2^2$
&
$S_t = S_{t-1}\!\left(I - \epsilon_t k_t k_t^\top\right) + \epsilon_t v_t k_t^\top,\quad
\epsilon_t = \dfrac{\beta_t}{1+\beta_t\, k_t^\top k_t}$
\\
DeltaNet
&
$\|S_t - S_{t-1}\|_F^2 - 2\langle S_t k_t,\, \beta_t\bigl(v_t - S_{t-1}k_t\bigr)\rangle$
&
$S_t = S_{t-1}\!\left(I - \beta_t k_t k_t^\top\right) + \beta_t v_t k_t^\top$
\\
GatedDeltaNet
&
$\|S_t - \alpha_t S_{t-1}\|_F^2 - 2\langle S_t k_t,\, \beta_t\bigl(v_t - \alpha_t S_{t-1}k_t\bigr)\rangle$
&
$S_t = S_{t-1}\!\left(\alpha_t\left(I - \beta_t k_t k_t^\top\right)\right) + \beta_t v_t k_t^\top$
\\
\cellcolor{avgblue}
\textbf{Q-Delta (ours)}
&
\cellcolor{avgblue}
$\|S_t - \alpha_t S_{t-1}\|_F^2
- 2\Big\langle S_t k_t,\,
\beta_t\bigl(v_t - \alpha_t S_{t-1}k_t - \textcolor{blue}{\lambda_t\,\alpha_t S_{t-1}q_t}\bigr)
\Big\rangle$
&
\cellcolor{avgblue}
$S_t
=
S_{t-1}\!\left(\alpha_t\left(I - \beta_t\left(k_t k_t^\top + \textcolor{blue}{\lambda_t \, q_t k_t^\top}\right)\right)\right)
+
\beta_t v_t k_t^\top$
\\
\bottomrule
\end{tabular}
\end{table*}

\vspace{-0.5em}
\paragraph{Chunkwise Parallel Form.}
Although linear recurrences achieve an efficient linear-complexity with $\mathcal{O}(LD^2)$, their fully sequential nature limits training efficiency on modern hardware that favors parallelized computations. To address this, recent works reformulate linear recurrences in a {chunkwise parallel} manner, combining inter-chunk recurrence with intra-chunk parallel computation. The key idea is to partition the sequence into contiguous chunks of length $C$, allowing parallel computation within each chunk while maintaining a recurrent dependency across chunks. 
For the basic linear attention, the chunkwise formulation is
\begin{equation}
\begin{aligned}\label{eq:chunk_basic}
&S_{[t+1]} = S_{[t]} + V_{[t]}^\top K_{[t]}, \\
&O_{[t]} = Q_{[t]} S_{[t]}^\top + \bigl(Q_{[t]} K_{[t]}^\top \odot \mathrm{M}_{[t]}\bigr) V_{[t]},
\end{aligned}
\end{equation}
where $K_{[t]}, Q_{[t]}, V_{[t]} \in \mathbb{R}^{C \times D}$ stack the key, query, and value vectors within the chunk, and $\mathrm{M}_{[t]} \in \mathbb{R}^{C \times C}$ enforces causality within the chunk. 
More structured delta-rule recurrence can be expressed as 
\begin{equation}
\begin{aligned}
\label{eq:chunk_delta_state}
&S_{[t+1]}= S_{[t]} + \bigl(U_{[t]} - W_{[t]} S_{[t]}\bigr) K_{[t]}, \\
&O_{[t]}= Q_{[t]} S_{[t]}^\top 
+ \bigl(Q_{[t]} K_{[t]}^\top \odot M\bigr)
\bigl(U_{[t]} - W_{[t]} S_{[t]}\bigr).
\end{aligned}
\end{equation}
Here $U_{[t]}$ and $W_{[t]}$ are chunkwise matrices induced by the UT transform to ensure sequential delta update in chunk-level recurrence~\citep{joffrain2006accumulating,dominguez2018fast,yang2025parallelizinglineartransformersdelta}.
\begin{equation}
\begin{aligned}
&\mathbf{T}_{[t]}
=\left(
\mathbf{I}+\operatorname{tril}
\!\left(\operatorname{diag}(\boldsymbol{\beta}_{[t]})
\,\mathbf{K}_{[t]}\mathbf{K}_{[t]}^{\top},-1\right)
\right)^{-1}
\operatorname{diag}(\boldsymbol{\beta}_{[t]}), \\
&\mathbf{W}_{[t]} = \mathbf{T}_{[t]} \mathbf{K}_{[t]},
\quad 
\mathbf{U}_{[t]} = \mathbf{T}_{[t]} \mathbf{V}_{[t]}. 
\end{aligned}
\end{equation}
This formulation preserves the original delta-rule dynamics while enabling efficient hardware-parallelism.

\vspace{-0.5em}
\section{State Beyond Key–Value Association}

Across existing linear attention and SSMs, the state is predominantly interpreted as a key--value associative memory, while the query $q_t$ is used exclusively at readout time.
Under this interpretation, the query serves only as a passive readout mechanism and plays no role in shaping the state dynamics.
In this section, we question this assumption and show that the query readout itself encodes structured value information derived from the state, motivating a refined view of state evolution.

\vspace{-0.5em}
\subsection{Query for Value Prediction}

Prior work has shown that query-induced state readout, $o_t = S_tq_t$, following the linear recurrences can be expressed as a
value-weighted aggregation of past tokens~\citep{yang2025gateddeltanetworksimproving}.
We extend this characterization to
the readout taken from the prior state, $\hat{o}_t := S_{t-1}q_t$, and generalize it to an arbitrary linear transition
operator, so that the temporally mixed value-aggregation form holds uniformly across generic linear recurrence rules, including delta-rule and gated recurrences. We use this reformulation to motivate
query-conditioned state evolution in the following sections.

\paragraph{Query readout as temporally mixed value.}
Consider a generic form of recurrent state sequence $\{S_t\}_{t\ge1}$ defined as 
\begin{equation}
\label{eq:general_recurrence}
S_t \;=\; S_{t-1} P_t \;+\; \eta_t\, v_t k_t^\top,
\qquad
S_0 = 0,
\end{equation}
where $v_t \in \mathbb{R}^{d_v}$, $k_t \in \mathbb{R}^{d_k}$, $\eta_t \in \mathbb{R}$, and
$P_t \in \mathbb{R}^{d_k \times d_k}$ is a linear state transition operator.
Given this recurrence, state for each $t$ can be written as a linear combination of previously written value vectors
\begin{equation}
\label{eq:state_decomp}
S_{t-1} \;=\; \sum_{\tau=1}^{t-1} v_\tau\, b_{\tau,t-1}^\top,
\vspace{-0.5em}
\end{equation}
where the coefficient vectors $\{b_{\tau,t-1}\}_{\tau<t} \subset \mathbb{R}^{d_k}$ satisfy the backward recursion as
\begin{equation}
\label{eq:b_recursion}
b_{\tau,t} = P_t^\top b_{\tau,t-1}\quad(\tau < t),
\qquad
b_{t,t} = \eta_t k_t.
\end{equation}
Unrolling this gives the closed form, for any $\tau < t$,
\vspace{-0.5em}
\begin{equation}
\label{eq:b_closedform_gated}
b_{\tau,t-1}
=
\eta_\tau
\left(
\prod_{j=\tau+1}^{t-1} P_j^\top
\right)
k_\tau.
\vspace{-0.5em}
\end{equation}
Consequently, the query-conditioned prediction from the prior state, $S_{t-1} q_t$,
admits the temporally mixed value form as follows
\begin{equation}
\label{eq:pred_mix_gamma}
\hat{o}_t
= \sum_{\tau=1}^{t-1} \gamma_{\tau,t} v_\tau,
\qquad
\gamma_{\tau,t} := b_{\tau,t-1}^\top q_t \in \mathbb{R},
\end{equation}
so the query readout lies in the span of previously stored values and acts as a weighted value aggregation over past timesteps (see Appendix~\ref{app:qvp_proof} for derivations).
This result specializes to a standard linear attention ($P_t=I$)~\citep{katharopoulos2020transformersrnnsfastautoregressive},
the delta rule ($P_t = I-\beta_t k_t k_t^\top$)~\citep{yang2025parallelizinglineartransformersdelta},
and gated delta variants ($P_t=\alpha_t(I-\beta_t k_t k_t^\top)$)~\citep{yang2025gateddeltanetworksimproving}.

\paragraph{Attention over accumulated memory.}
Define the {time-evolved key} $\tilde{k}_{\tau, t}$ associated with past key--value pair $(k_\tau,v_\tau)$ at query time $t$ as
\begin{align}
\label{eq:time_evolved_key_gated}
\tilde{k}_{\tau,t}
:=& \left(
\prod_{j=\tau+1}^{t-1} P_j^\top
\right)
k_\tau \quad \in \mathbb{R}^{d_k},
\end{align}
which then gives $b_{\tau,t-1} = \eta_\tau \tilde{k}_{\tau, t}$.
Intuitively, $\tilde{k}_{\tau,t}$ encodes how the original key $k_\tau$ is transformed by subsequent state transitions via $\{P_j\}_{j=\tau+1}^{t-1}$.
It determines how strongly the current query $q_t$ can retrieve the value $v_\tau$ from the accumulated memory at current time $t$.

Using the definition of the time-evolved key $\tilde{k}_{\tau,t}$, the
query-induced prediction formed from the prior state,
$\hat{o}_t := S_{t-1} q_t$, can be written as
\begin{equation}
\label{eq:attention_over_memory}
\hat{o}_t
=
\sum_{\tau=1}^{t-1} \gamma_{\tau,t} v_\tau,
\qquad
\gamma_{\tau,t}
=
\eta_\tau\, q_t^\top \tilde{k}_{\tau,t}.
\end{equation}
Equivalently,
\vspace{-1.0em}
\begin{equation}
\label{eq:attention_form_expanded}
\hat{o}_t
=
\sum_{\tau=1}^{t-1}
\langle q_t, \tilde{k}_{\tau,t} \rangle_{\eta_\tau}
\, v_\tau .
\end{equation}
This has the form of unnormalized attention, in which the current query $q_t$
is matched against the time-evolved keys $\{\tilde{k}_{\tau,t}\}_{\tau<t}$ to mix
values stored across time.
Thus, $\hat{o}_t$ is a query-dependent value prediction obtained by attending
to the entire accumulated key--value memory, with the state transition operators
$\{P_j\}_{j=\tau+1}^{t-1}$ governing how past keys are reshaped over time.

\paragraph{Why query readout matters.}
The analysis above shows that the query-conditioned readout $\hat{o}_t = S_{t-1} q_t$
is a structured value aggregation driven by attending over the accumulated memory
with time-evolved keys.
This aggregation lies in the same value space as the key readout $S_{t-1}k_t$, but is weighted by attention-like similarities between the current query and past keys, rather
than key--key self-similarity.
As a result, $\hat{o}_t$ gives a query-induced  value information that is
already encoded in the state, which is not accessible through key-based recall alone.

This value-aggregation form is, on its own, an algebraic identity that holds for any
probe vector. What distinguishes the query is its role in the recurrence, $q_t$ is the
direction along which the state is finally read out, since the layer output is $o_t = S_t q_t$.
The query-conditioned prediction $\hat{o}_t = S_{t-1} q_t$ is not an arbitrary
projection of the state, but the model's own value prediction along the very direction
through which the memory is ultimately consumed downstream. Yet, conventional delta-rule corrects the state only against the key-retrieved
value $\hat{v}_t = S_{t-1}k_t$. The query readout $\hat{o}_t$ adds a complementary corrective
signal to state-evolution process, aligning it with the direction the state is actually read out, motivating its
inclusion in the update.

\subsection{Complementary Error Signals}

\paragraph{Mixed prediction errors.} 
Given keys $k_t \in \mathbb{R}^{d_k}$, values $v_t \in \mathbb{R}^{d_v}$, queries
$q_t \in \mathbb{R}^{d_k}$, and the prior state $S_{t-1} \in \mathbb{R}^{d_v \times d_k}$, we first compute two distinct value estimates,
\begin{equation}
\hat{v}_t := S_{t-1} k_t,
\qquad
\hat{o}_t := S_{t-1} q_t .
\end{equation}
Here, $\hat{v}_t$ corresponds to the {key-retrieved value}, 
while $\hat{o}_t$ is a {query-conditioned prediction}.

Intuitively, the two estimates $\hat{v}_t$ and $\hat{o}_t$ correspond to distinct projections of the same accumulated memory state.
The key-based readout $\hat{v}_t = S_{t-1} k_t$ evaluates the memory along the direction of the current key, yielding the value currently associated with $k_t$ under the learned key–value mapping.
In contrast, $\hat{o}_t = S_{t-1} q_t$ evaluates the same state along a different direction specified by the query, producing a value aggregation shaped by how the query aligns with time-evolved keys in memory.
Although both readouts are formed by aggregating past values, they generally induce different weightings over those values.
This distinction indicates that $\hat{v}_t$ and $\hat{o}_t$ encode complementary information present in the state, revealed by different projections.
Therefore, incorporating both in the state update process corrects this mixed prediction error, enabling more informative memory updates.

\begin{figure}[h]
    \centering
    \includegraphics[width=0.49\textwidth]{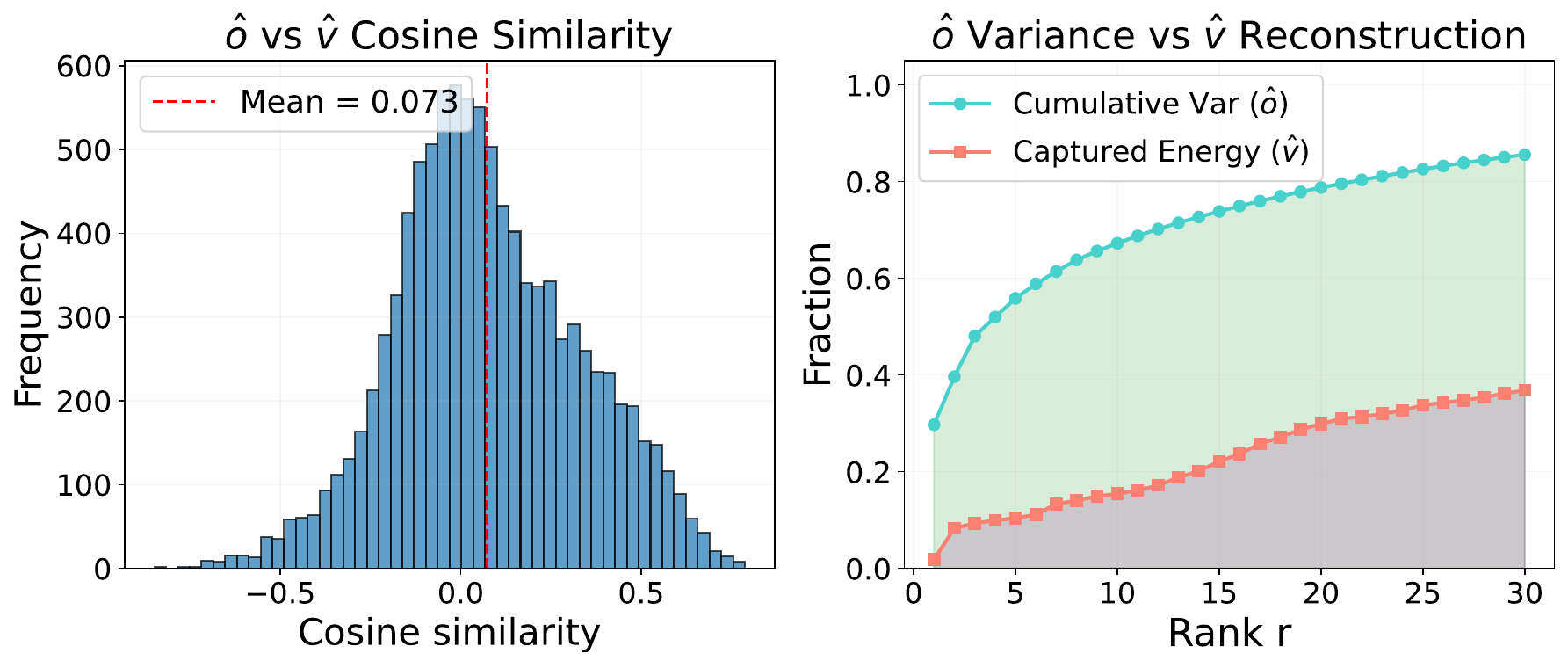}
    \caption{
    {Complementarity analysis between $\hat{v}$ and $\hat{o}$.}
    {Left:} Distribution of cosine similarity between $\hat{v}$ and $\hat{o}$, showing low alignment on average.
    {Right:} Cumulative $\hat{o}$ variance and $\hat{v}$ reconstruction energy across principal subspace rank $r$ of $\hat{o}$.
    }
    \label{fig:comp_vo}
    \vspace{-1em}
\end{figure}

Figure~\ref{fig:comp_vo} provides empirical evidence that the key-retrieved value
$\hat{v}_t = S_{t-1}k_t$ and the query-conditioned prediction
$\hat{o}_t = S_{t-1}q_t$ encode complementary, rather than redundant,
information from the recurrent state.
{Left:} we plot the distribution of cosine similarities
$\langle \hat{o}_t, \hat{v}_t \rangle / (\|\hat{o}_t\|\,\|\hat{v}_t\|)$
gathered across 10000 timesteps and 3 layers (5, 10, 15) of 340M Q-Delta.
The distribution is centered close to zero (mean $\approx 0.07$),
indicating that $\hat{o}_t$ and $\hat{v}_t$ occupy largely decorrelated
directions in value space, despite being derived from the same state
$S_{t-1}$.
{Right:} we analyze complementarity at the subspace level by
comparing the cumulative variance explained by the principal components of $\hat{o}$ with the fraction of $\hat{v}$ energy captured when projected onto the corresponding subspace of $\hat{o}$.
Specifically, we perform PCA on samples of $\hat{o}$ and measure the
reconstruction energy of $\hat{v}$ under the top-$r$ principal
subspace.
The substantial gap between the two curves shows that directions
accounting for most of the variance of $\hat{o}$ explain only a limited portion of the energy of $\hat{v}$.
Together, these results indicate that query-based predictions
emphasize value components that are not well represented by key-based
recall alone, supporting the use of mixed errors, $\hat{v}_t$ and $\hat{o}_t$, as complementary error signals for the state evolution under Q-Delta update rule.

\subsection{Q-Delta: Query-Aware Delta Rule}

We now introduce {Q-Delta}, a query-aware extension of the delta rule that incorporates
query-conditioned prediction feedback into state evolution.
Q-Delta builds upon delta-based associative memory updates, while allowing both key and query to participate in correcting the stored state. 
Table~\ref{tbl:online_objectives} summarizes Q-Delta in comparison to prior linear RNN models under a unified online learning objective framework and
\Cref{fig:architecture} illustrates the mechanism of Q-Delta.

\subsubsection{Sequential recurrence.}

\paragraph{Q-Delta rule}
We propose Q-Delta, a query-aware delta rule:
\begin{equation}
\label{eq:qdelta_state_update}
S_t
=
S_{t-1}
+
\beta_t \bigl(v_t - \hat{v}_t - \textcolor{blue}{\lambda_t \hat{o}_t} \bigr) k_t^\top ,
\end{equation}
where $\beta_t \in [0,1]$ controls the update strength and $\lambda_t \in [0,1]$ modulates the
influence of query-based feedback.

Rewriting Eq.~\eqref{eq:qdelta_state_update} yields an equivalent linear form,
\begin{equation}
\label{eq:qdelta_linear_form}
S_t
=
S_{t-1}
\bigl(I - \beta_t (k_t k_t^\top + \textcolor{blue}{\lambda_t q_t k_t^\top})\bigr)
+
\beta_t v_t k_t^\top .
\end{equation}
Including a forget gate $\alpha_t \in (0,1)$, the final Q-Delta update rule is as follows:
\begin{equation}
\begin{aligned}
\label{eq:qdelta_decay}
S_t &=
\alpha_t S_{t-1}
\bigl(I - \beta_t (k_t k_t^\top + \textcolor{blue}{\lambda_t q_t k_t^\top})\bigr)
+
\beta_t v_t k_t^\top . \\
o_t &= S_t q_t .
\end{aligned}
\end{equation}

\vspace{-1em}
\subsubsection{Chunkwise parallel form.}
We now derive a hardware-efficient chunkwise-parallel formulation for Q-Delta referring to the chunkwise expansion strategy of GatedDeltaNet.
Defining \textcolor{blue}{$x_t := k_t + \lambda_t q_t$}, the Q-Delta recurrence follows:
\begin{equation}
\label{eq:qdelta_gated_form}
S_t
=
\alpha_t S_{t-1}
\bigl(I - \beta_t \textcolor{blue}{x_t} k_t^\top\bigr)
+
\beta_t v_t k_t^\top,
\end{equation}
where $\lambda_t \in (0,1)$ is a learnable head-wise query-feedback coefficient
(see Appendix~\ref{app:lambimpl} for parameterization).
Fix a chunk indexed by $[t]$ consisting of $C$ consecutive timesteps
$\{t_1,\dots,t_C\}$, and denote the chunk entrance state by
$S_{[t]} := S_{[t]}^0 = S_{[t-1]}^C$.
For each timestep $t_i$, define
$P_{t_i} := I - \beta_{t_i} \textcolor{blue}{x_{t_i}} k_{t_i}^\top .$
By partially expanding the recurrence, the state after $r \le C$
steps within the same chunk can be written as
\vspace{-0.5em}
\begin{equation}
\label{eq:qdelta_chunk_expand}
S^r_{[t]}
=
S_{[t]}
\underbrace{\Bigl(\prod_{i=1}^r \alpha_{t_i} P_{t_i}\Bigr)}_{=:F^r_{[t]}}
+
\underbrace{
\sum_{i=1}^r
\Bigl(
\beta_{t_i} v_{t_i} k_{t_i}^\top
\prod_{j=i+1}^r \alpha_{t_j} P_{t_j}
\Bigr)
}_{=:G^r_{[t]}} .
\end{equation}
Let $\gamma^r_{[t]} := \prod_{i=1}^r \alpha_{t_i}$.
Then $F^r_{[t]} = \gamma^r_{[t]} P^r_{[t]}$, where
\begin{equation}
P^r_{[t]} := \prod_{i=1}^r P_{t_i} = \prod_{i=1}^r (I - \beta_{t_i} \textcolor{blue}{x_{t_i}} k_{t_i}^\top).
\end{equation}
Following the extended WY representation~\citep{Bischof1985TheWR} from GatedDeltaNet,
there exist vectors
$ w^i_{[t]} \in \mathbb{R}^{d_k}$ and
$ u^i_{[t]} \in \mathbb{R}^{d_v}$ defined as 
\begin{equation}
\begin{aligned}
w^r_{[t]}
&=
\beta_{t_r}
\left(\textcolor{blue}{x_{t_r}}
-
\sum_{i=1}^{r-1}
w^i_{[t]} \bigl(k_{t_i}^\top \textcolor{blue}{x_{t_r}}\bigr)
\right), \\
\tilde u^r_{[t]}
&=
\beta_{t_r}
\left( v_{t_r}
-
\sum_{i=1}^{r-1}
 \frac{\gamma_{[t]}^r}{\gamma_{[t]}^i} \tilde u^i_{[t]} \bigl(k_{t_i}^\top \textcolor{blue}{x_{t_r}}\bigr)
\right),
\vspace{-1em}
\end{aligned}
\end{equation}
such that (derivations in~\ref{app:wy_representation})
\vspace{-0.5em}
\begin{equation}
P^r_{[t]} = I - \sum_{i=1}^r  w^i_{[t]} k_{t_i}^\top,
\quad
G^r_{[t]} = \sum_{i=1}^r  \frac{\gamma_{[t]}^r}{\gamma_{[t]}^i} \tilde u^i_{[t]} k_{t_i}^\top .
\end{equation}
Substituting the WY forms into Eq.~\eqref{eq:qdelta_chunk_expand} gives
\begin{align}
S^r_{[t]}
&=
\gamma^r_{[t]} S_{[t]} P^r_{[t]} + G^r_{[t]} \nonumber \\
&=
\gamma^r_{[t]} S_{[t]}
\Bigl(I - \sum_{i=1}^r  w^i_{[t]} k_{t_i}^\top\Bigr)
+
\sum_{i=1}^r  \frac{\gamma_{[t]}^r}{\gamma_{[t]}^i} \tilde u^i_{[t]} k_{t_i}^\top \nonumber \\
&=
\gamma^r_{[t]} S_{[t]}
+
\sum_{i=1}^r
\bigl(
\tilde u^i_{[t]} - \gamma^i_{[t]} S_{[t]}  w^i_{[t]}
\bigr) \frac{\gamma_{[t]}^r}{\gamma_{[t]}^i}
k_{t_i}^\top .
\end{align}
At the end of the chunk ($r=C$), define the scaled state
$\overrightarrow S_{[t]} := \gamma^C_{[t]} S_{[t]}$,  
$ \overleftarrow W_{[t]} := [\gamma_{[t]}^1 w^1_{[t]},\dots, \gamma_{[t]}^C w^C_{[t]}]$, $Q_{[t]} = [q_{t_1}, \ldots, q_{t_C}]$, and $\overrightarrow K_{[t]} := (\Gamma_{[t]})_{C (\cdot)} \ K_{[t]}$ where $(\Gamma_{[t]})_{ij} = \frac{\gamma_{[t]}^i}{\gamma_{[t]}^j}$.
Then chunk-level state update admits the compact form
\begin{equation}
\begin{aligned}
&S_{[t+1]}
=
\overrightarrow S_{[t]}
+
\bigl(
 \widetilde U_{[t]} - \overleftarrow W_{[t]} S_{[t]}
\bigr)^\top
\overrightarrow K_{[t]}\\
&O_{[t]}
=
\, \overleftarrow Q_{[t]} S_{[t]}^\top
+
\bigl( Q_{[t]} K_{[t]}^\top \odot M\bigr)
\bigl( \widetilde U_{[t]} - \overleftarrow W_{[t]} S_{[t]}\bigr)
\end{aligned} 
\end{equation}
such that $S_{[t+1]} \in \mathbb{R}^{d_v \times d_k}$ and $O_{[t]} \in \mathbb{R}^{C \times d_v}$ and $M$ is the causal mask.
The vectors $\widetilde U_{[t]}$ and $W_{[t]}$ can be computed efficiently using a UT transform referring to DeltaNet, yielding a hardware-efficient chunkwise-parallel algorithm for Q-Delta (derivations in Appendix~\ref{app:ut_transform}):
\begin{equation}
\begin{aligned}
\label{eq:qdelta_chunk_U}
\widetilde U_{[t]}
=&
\Bigl[
I + \mathrm{Lower}\!\bigl(
\mathrm{d}(\beta_{[t]})
(\Gamma_{[t]} \odot \textcolor{blue}{X_{[t]}} K_{[t]}^\top)
\bigr)
\Bigr]^{-1}
\mathrm{d}(\beta_{[t]}) V_{[t]},\\
{W}_{[t]}
=&
\Bigl[I+\mathrm{Lower}\!\bigl(\mathrm{d}(\beta_{[t]})\textcolor{blue}{X_{[t]}}K_{[t]}^\top\bigr)\Bigr]^{-1}
\mathrm{d}(\beta_{[t]})\,\textcolor{blue}{X_{[t]}}.
\end{aligned}
\end{equation}
We also provide a Triton~\citep{tillet2019triton} kernel specific to both fully recurrent and chunkwise parallelized Q-Delta.

\subsubsection{Stability Analysis of Q-Delta Dynamics}

Here we analyze the prediction error dynamics of the proposed Q-Delta update rule.
While Q-Delta is motivated by correcting a mixed prediction error involving both key-
and query-induced memory readouts, its update rule does not correspond to a strict gradient
descent step on $\|v_t - S_{t-1}(k_t + \lambda q_t)\|^2$ unlike other delta rules under standard key-value association paradigm.
We therefore provide a theoretical analysis of the stability and error contraction
properties induced by this recurrence, showing that key--query jointly corrective feedback leads to controlled error dynamics under mild empirical conditions, despite not under a strict gradient descent interpretation.

\begin{restatable}[One-step contraction of mixed prediction error under Q-Delta]{lemma}{onesteplemma}
\label{lemma:onestep_contraction}
Let $k_t,q_t\in\mathbb{R}^d$ and $\lambda_t\in[0,1]$, and define the mixed input
$
x_t := k_t + \lambda_t q_t.
$
Consider the Q-Delta update
\[
S_t
=
S_{t-1}
+
\beta_t\bigl(v_t - S_{t-1}x_t\bigr)k_t^\top,
\qquad \beta_t \in (0,1].
\]
Assume that the scalar alignment
$
a_t := k_t^\top x_t
$
satisfies $\beta_t a_t \in (0,2)$ almost surely, and define
\[
\rho := \sup_t |1-\beta_t a_t| \in (0,1).
\vspace{-0.5em}
\]
Then the mixed prediction error contracts in one step:
\[
\|v_t - S_t x_t\|
\;\le\;
\rho\,\|v_t - S_{t-1}x_t\|
\quad\text{almost surely for all } t.
\]
\end{restatable}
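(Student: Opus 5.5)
The plan is a direct one-step computation of the post-update residual along the mixed direction $x_t$, followed by taking norms. Write $e_{t-1} := v_t - S_{t-1}x_t$ for the pre-update mixed error and $e_t := v_t - S_t x_t$ for the post-update error, both evaluated with the same triple $(v_t,k_t,x_t)$. Substituting the Q-Delta update $S_t = S_{t-1} + \beta_t e_{t-1} k_t^\top$ gives
\begin{equation*}
S_t x_t = S_{t-1}x_t + \beta_t e_{t-1}\,(k_t^\top x_t) = S_{t-1}x_t + \beta_t a_t\, e_{t-1}.
\end{equation*}
The key observation is that the rank-one correction is written along $k_t^\top$ and read back along $x_t$. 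It therefore acts on the residual only through the scalar alignment $a_t = k_t^\top x_t = \|k_t\|^2 + \lambda_t k_t^\top q_t$.

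Subtracting from $v_t$ yields the exact identity
\begin{equation*}
e_t = (1-\beta_t a_t)\, e_{t-1}.
\end{equation*}
The mixed error is thus simply rescaled by the scalar factor $1-\beta_t a_t$. Taking norms gives $\|e_t\| = |1-\beta_t a_t|\,\|e_{t-1}\|$. By the definition of $\rho$ as a supremum we have $|1-\beta_t a_t| \le \rho$ for every $t$, and this yields the claimed bound $\|e_t\|\le\rho\|e_{t-1}\|$ almost surely.

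The assumption $\beta_t a_t\in(0,2)$ is exactly what makes each factor strictly less than one in absolute value. This is where the query term could otherwise cause trouble: if $\lambda_t k_t^\top q_t$ is very negative, $a_t$ can become nonpositive and the factor exceeds one. The alignment condition rules this out.

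There is essentially no analytic obstacle here. The only subtlety is bookkeeping: the contraction compares the errors on the \emph{same} target $(v_t,x_t)$ before and after the update, not errors at successive timesteps. The hypothesis $\rho<1$ must be taken as given, since pointwise $\beta_t a_t\in(0,2)$ alone does not prevent the supremum from equalling one. If $\rho$ is only assumed to be at most one, the statement still holds with a non-strict contraction. I would also remark that the argument never uses the gate $\alpha_t$ and holds deterministically for each realization, so the qualifier ``almost surely'' is inherited directly from the hypothesis.
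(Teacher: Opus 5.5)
Your proof is correct and matches the paper's argument: right-multiplying the update by $x_t$ gives the exact identity $v_t - S_t x_t = (1-\beta_t a_t)(v_t - S_{t-1}x_t)$, and taking norms with $|1-\beta_t a_t|\le\rho$ finishes it. Your side remark is also accurate: the bound itself only needs $|1-\beta_t a_t|\le\rho$, which holds by the definition of the supremum, whereas $\rho<1$ is an extra hypothesis built into the statement and does not follow from the pointwise condition $\beta_t a_t\in(0,2)$.
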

\vspace{-0.6em}

Lemma~\ref{lemma:onestep_contraction} establishes a sufficient condition under which the mixed prediction error strictly decreases under a single-step Q-Delta update. However, in practice, both $\beta_t$ and $a_t$ are data-dependent and vary across timesteps, so a single analytic bound on $\beta_t a_t$ cannot be determined.
Nevertheless, empirical measures show that $\beta_t a_t$ consistently stays within the contraction regime during training. Figure~\ref{fig:stability_analyses}-(a) shows the distribution of $\beta_t a_t$ collected from the full training steps on 15B tokens across all layers of a 340M Q-Delta model, where values are tightly concentrated within the range of contraction $\beta_t a_t \in (0, 2)$ with mean 0.043.

\begin{figure}[h]
    \centering
    \vspace{-0.5em}
      \includegraphics[width=\linewidth]{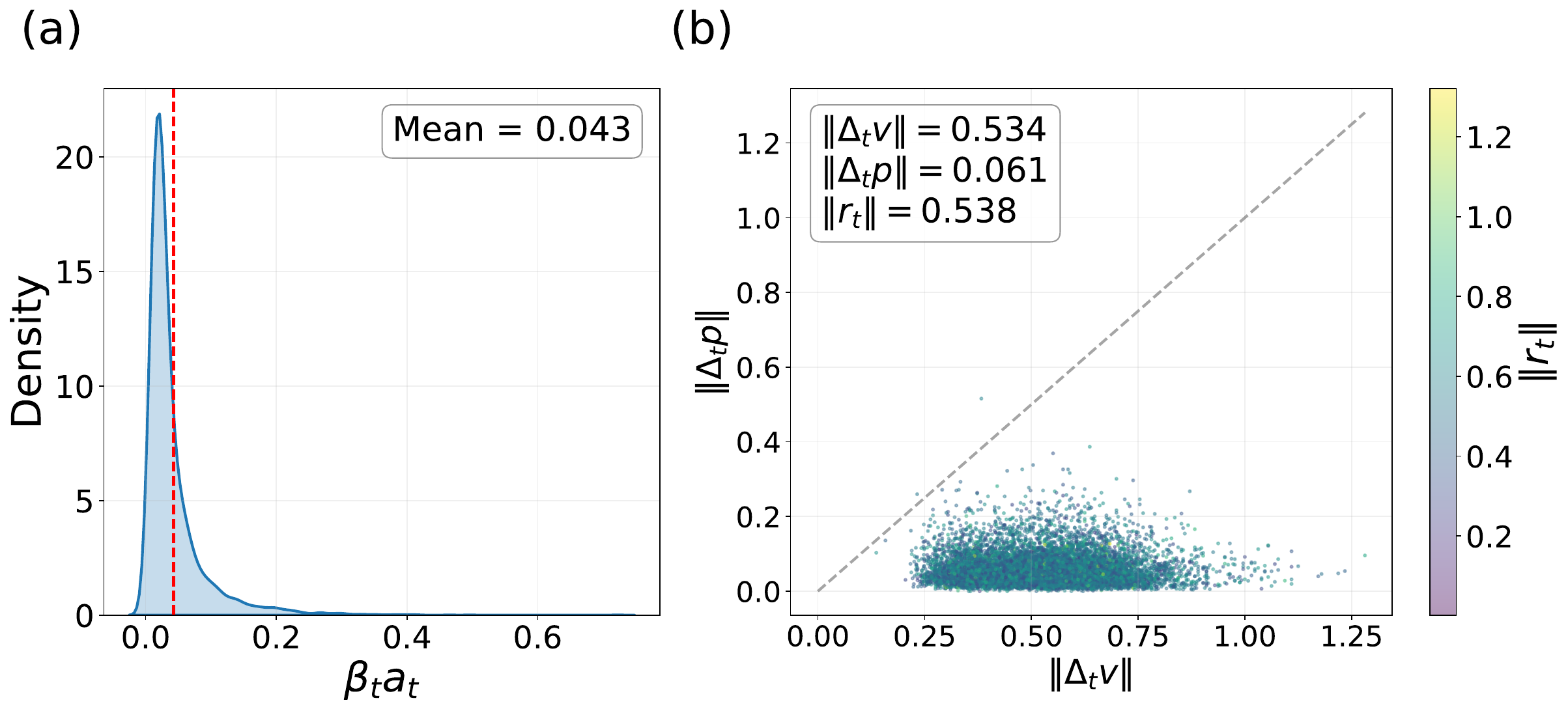}
      \caption{Empirical stability analyses of Q-Delta dynamics. (a): Distribution of $\beta_t a_t$, (b): Scatter plot for $\|\Delta_t v\|$, $\|\Delta_t p\|$, $\|r_t\|$}
    \label{fig:stability_analyses}
    \vspace{-0.5em}
\end{figure}

Building on the one-step contraction result in Lemma~\ref{lemma:onestep_contraction}, we further establish a global stability tracking for Q-Delta, showing that the mixed readout error shows geometric decay and remains uniformly bounded over time, with the bound proportional to the magnitude of residual drifts consisting of target drift and prediction drift.

\begin{restatable}[Global stability and geometric tracking of Q-Delta]{theorem}{globalcontracprop}
\label{thm:qdelta_stability_tracking}
Suppose the single-step contraction condition of Lemma~\ref{lemma:onestep_contraction}
holds with constant $\rho\in(0,1)$, i.e.,
\[
\|v_t - S_t x_t\| \le \rho\,\|v_t - S_{t-1}x_t\|
\qquad \text{a.s. for all } t\ge 1.
\]
Define the pre-update and post-update prediction errors
\[
\tilde e_t := v_t - S_{t-1}x_t,
\qquad
e_t := v_t - S_t x_t.
\]
Define the residual drift
\vspace{-0.5em}
\[
r_t \;:=\; \Delta_t v - \Delta_tp,
\vspace{-0.5em}
\]
where the $\Delta_t v := v_t - v_{t-1}$ is target drift and $\Delta_t p := S_{t-1}(x_t - x_{t-1})$ is prediction drift.
Assume $r_t$ is uniformly bounded, then there exists $r<\infty$ such that
\[
\|r_t\| \le r
\qquad \text{for all } t\ge 1.
\]
Then, almost surely for all $t\ge 1$,
\[
\|e_t\|
\le \rho\|e_{t-1}\|+\rho r 
\;\le\;
\rho^t\|e_0\| + \frac{1-\rho^t}{1-\rho}\,\rho r.
\]
\end{restatable}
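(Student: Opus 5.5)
The plan is to relate the pre-update error $\tilde e_t$ at step $t$ to the post-update error $e_{t-1}$ from the previous step, then combine this with the one-step contraction of Lemma~\ref{lemma:onestep_contraction} and unroll the resulting scalar recursion. We start by writing
\[
\tilde e_t = v_t - S_{t-1}x_t
= \bigl(v_{t-1} - S_{t-1}x_{t-1}\bigr) + (v_t - v_{t-1}) - S_{t-1}(x_t - x_{t-1}).
\]
The first bracket is exactly $e_{t-1} = v_{t-1} - S_{t-1}x_{t-1}$, the post-update error at time $t-1$, since $S_{t-1}$ is the state after the update at step $t-1$. Hence
\[
\tilde e_t = e_{t-1} + \Delta_t v - \Delta_t p = e_{t-1} + r_t .
\]
The key observation is that both drifts are evaluated against the same state $S_{t-1}$. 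This is why the prediction drift is defined as $\Delta_t p = S_{t-1}(x_t - x_{t-1})$, and it is what makes the identity exact rather than approximate.

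Next we apply the contraction hypothesis $\|e_t\| \le \rho\|\tilde e_t\|$ and the triangle inequality:
\[
\|e_t\| \le \rho\|e_{t-1} + r_t\| \le \rho\|e_{t-1}\| + \rho\|r_t\| \le \rho\|e_{t-1}\| + \rho r .
\]
This gives the first inequality, almost surely, for every $t\ge1$. The conventions at $t=1$ need care. We take $e_0 := v_0 - S_0 x_0$ with some fixed (e.g.\ zero) initial $v_0, x_0$, so that $\Delta_1 v$ and $\Delta_1 p$ are defined and the identity above holds at $t=1$ as well. The almost-sure qualifier is handled by intersecting the countably many full-measure events on which the contraction holds.

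Finally we unroll the recursion $a_t \le \rho a_{t-1} + \rho r$ with $a_t := \|e_t\|$ by induction on $t$. Assuming $a_{t-1} \le \rho^{t-1}a_0 + \frac{1-\rho^{t-1}}{1-\rho}\rho r$, we get
\[
a_t \le \rho^t a_0 + \Bigl(\rho\,\tfrac{1-\rho^{t-1}}{1-\rho} + 1\Bigr)\rho r = \rho^t a_0 + \tfrac{1-\rho^t}{1-\rho}\rho r .
\]
Since $\rho\in(0,1)$, this yields geometric decay of the initial error together with the uniform bound $\frac{\rho r}{1-\rho}$.

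The main obstacle is the bookkeeping in the first step rather than any analytic estimate. One must match $e_{t-1}$, which uses $S_{t-1}$ and the pair $(v_{t-1},x_{t-1})$, with $\tilde e_t$, which uses the same $S_{t-1}$ but the new pair $(v_t,x_t)$. One must also fix the initialization conventions so that the identity holds at $t=1$. Once the decomposition $\tilde e_t = e_{t-1} + r_t$ is in place, the rest is the standard discrete Gr\"onwall unrolling.
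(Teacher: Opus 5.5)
Your proposal is correct and follows the paper's proof essentially step for step: the decomposition $\tilde e_t = e_{t-1} + r_t$, then the contraction combined with the triangle inequality, then unrolling the scalar recursion into the geometric sum. The differences are cosmetic: the paper re-derives $e_t = (1-\beta_t a_t)\tilde e_t$ from the update rule instead of invoking the stated contraction hypothesis directly, and it leaves the convention for $e_0$, $v_0$, $x_0$ at $t=1$ implicit, which you make explicit.
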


Theorem~\ref{thm:qdelta_stability_tracking} establishes that Q-Delta induces a stable global tracking dynamics on mixed key–query prediction errors.
As long as the one-step contraction condition $\beta_t a_t \in (0, 2)$ holds, the mixed readout error decays geometrically up to a bounded radius whose size is controlled by the magnitude of the residual drift $r_t$, which aggregates both target drift $\Delta_t v$ and prediction drift $\Delta_t p$. Intuitively, $\Delta_t p := S_{t-1}\Delta x_t$ captures how changes in the mixed input $(k_t,q_t)$ induce variation in the model’s joint prediction through the accumulated memory state.

\Cref{fig:stability_analyses}-(b) visualizes the relationship between these two drift terms $\Delta_t v$ and $\Delta_t p$, showing that prediction drift is typically smaller in magnitude than the corresponding target drift and remains concentrated near zero. This empirical behavior indicates that residual drift is largely dominated by target variation rather than prediction instability driven by readout key drift.
Figure~\ref{fig:stability_analyses}-(b) also implies that the residual drift terms in steady-state bound given in Theorem~\ref{thm:qdelta_stability_tracking} are well-controlled in magnitude within range (0, 1.2), with its mean norm 0.538, yielding a tight and practically useful  characterization of Q-Delta’s error contraction dynamics.

Taken together, Q-Delta behaves as a stable online learner, ensuring transient error contraction and long-horizon stability under  empirically verified conditions. 
This provides theoretical support for incorporating query-conditioned feedback into state evolution, and justifies its use as a principled state evolution mechanism beyond pure key–value association. Proofs for Lemma~\ref{lemma:onestep_contraction} and Theorem~\ref{thm:qdelta_stability_tracking} are in Appendix~\ref{app:stability_proof}.

\vspace{0.3em}
\section{Experiments}

\subsection{Experimental Setup}

All models are implemented based on pretraining framework {flash-linear-attention}~\citep{yang2024fla}.
We consider two model scales, 340M and 1.3B where the 340M models are pretrained on 15B tokens from the FineWeb-Edu ~\citep{penedo2024finewebdatasetsdecantingweb}, while the 1.3B models are pretrained on 30B tokens.
Training is performed on 4 NVIDIA RTX Pro 6000 (Blackwell) GPUs using mixed-precision arithmetic with bfloat16. 
We compare Q-Delta against RetNet~\citep{sun2023retentivenetworksuccessortransformer}, Mamba~\citep{gu2024mambalineartimesequencemodeling}, Mamba2~\citep{dao2024transformersssmsgeneralizedmodels}, DeltaNet~\citep{yang2025parallelizinglineartransformersdelta}, and GatedDeltaNet~\citep{yang2025gateddeltanetworksimproving}.
All baselines are reproduced on the same framework and trained under matched optimization settings to ensure fair comparison.
We use the AdamW optimizer with cosine learning rate scheduling and gradient clipping, with a peak learning rate of $1\times10^{-3}$ for 340M models and $4\times10^{-4}$ for 1.3B models.

Figure~\ref{fig:train_results}-(a) shows the training loss curves of 340M-parameter models pretrained on 15B tokens.
Q-Delta exhibits stable optimization behavior throughout training and achieves comparable or lower training loss relative to prior linear attention and state-space baselines.
The zoomed region within box highlights the early training phase, where Q-Delta follows comparable or even faster convergence trajectory without introducing optimization instability.

\begin{figure}[t]
    \centering
    \includegraphics[width=0.5\textwidth]{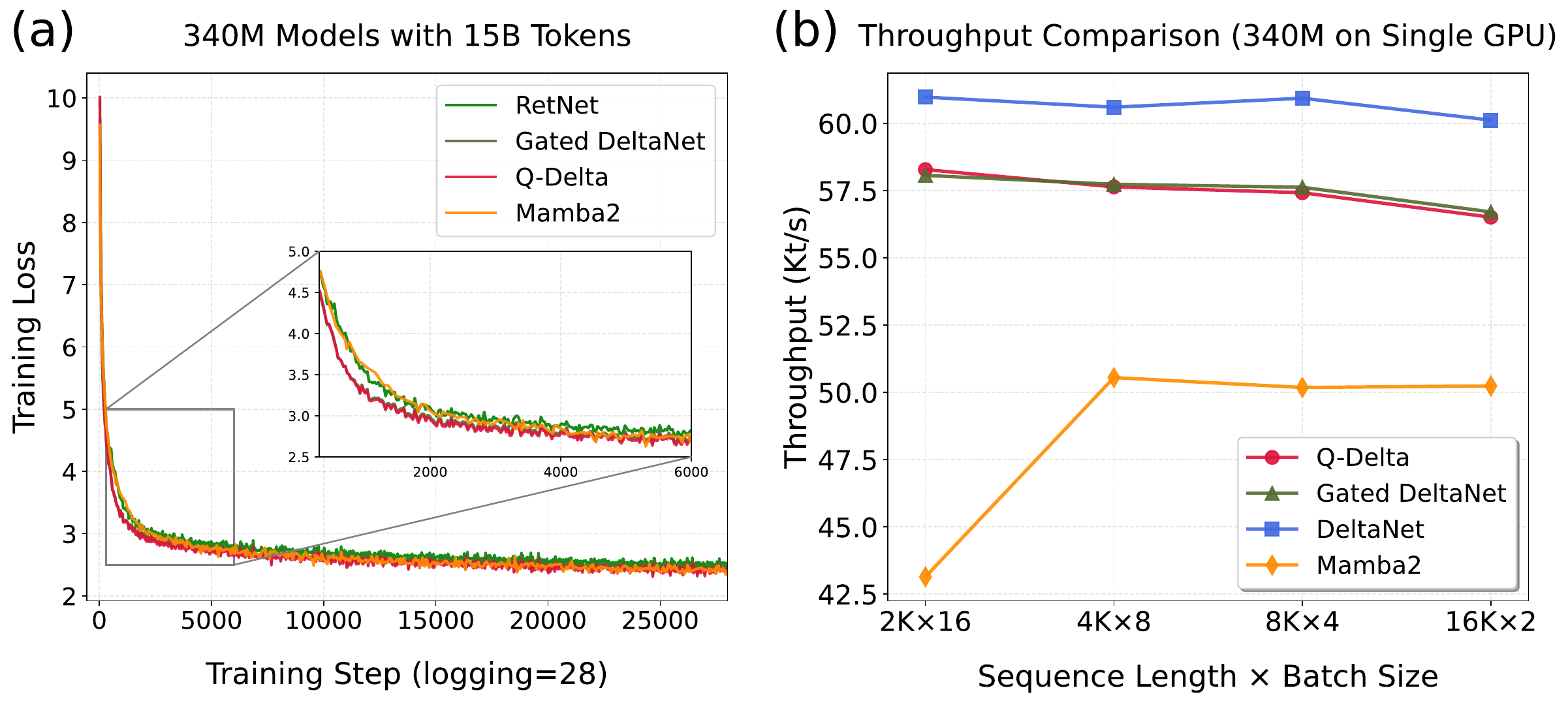}
    \caption{Training results on 340M Models. (a): Train loss curves over 28{,}600 steps (logging interval 28), with the box highlighting early-phase. (b): Single-GPU training throughput comparison across varying sequence length $\times$ batch size configurations.}
    \vspace{-2em}
    \label{fig:train_results}
\end{figure}

Figure~\ref{fig:train_results}-(b) reports a single-GPU throughput comparison on 340M-parameter models.
Throughput is measured as tokens processed per second by running 50 training steps on a single GPU, while varying the sequence length and batch size such that the total token count per step remains constant.
We evaluate configurations from $(2048,16)$ to $(16384,2)$, matching increased sequence lengths with proportionally reduced batch sizes.
Q-Delta achieves consistently high throughput across all configurations, closely matching delta-based baselines.
In contrast, Mamba2 exhibits noticeably lower throughput, particularly at shorter sequence lengths.
Overall, the results indicate that Q-Delta preserves the computational efficiency of delta-rule architectures while scaling robustly to longer sequence under practical training settings.

\begin{table*}[t]
\centering
\scriptsize
\setlength{\tabcolsep}{4pt}
\setlength{\arrayrulewidth}{0.2pt}
\caption{Zero-shot performance comparison of 340M and 1.3B models trained on FineWeb-Edu~\citep{penedo2024finewebdatasetsdecantingweb}. The commonsense Reasoning task is evaluated by lm-evaluation-harness~\citep{eval-harness}. All reproduced by us. Best in \textbf{bold} and second-best \secondbest{underlined}.}
\vspace{0.3em}
\label{tbl:main_results}
\resizebox{1.0\textwidth}{!}{%
\begin{tabular}{lccccccccccc}
\toprule
Model
& Lamb ppl. $\downarrow$
& Wiki ppl. $\downarrow$
& ARC$_\text{E}$ $\uparrow$
& ARC$_\text{C}$ $\uparrow$
& Hella. $\uparrow$
& Lamb. $\uparrow$
& PIQA $\uparrow$
& Wino. $\uparrow$
& BoolQ $\uparrow$
& OpenBook $\uparrow$
& Avg. $\uparrow$ \\
\midrule
\multicolumn{12}{l}{\textit{340M parameters, 15B training tokens}} \\
\midrule
RetNet
& 52.29
& 31.36
& 57.07
& 28.41
& 38.71
& 27.36
& 66.54
& 49.41
& 56.88
& 32.00
& 44.55 \\

Mamba
& \secondbest{31.20}
& 27.50
& \secondbest{59.68}
& \textbf{29.18}
& \textbf{42.98}
& 32.97
& {67.52}
& 51.78
& 55.81
& {33.20}
& \secondbest{46.64} \\

Mamba2
& \textbf{30.35}
& \textbf{26.60}
& 59.30
& \secondbest{29.01}
& \secondbest{42.13}
& \secondbest{33.67}
& \textbf{68.01}
& \secondbest{52.33}
& 51.90
& \secondbest{33.80}
& 46.27 \\

DeltaNet
& 63.04
& 28.78
& 54.88
& 27.47
& 38.65
& 27.11
& 63.60
& 49.96
& \secondbest{59.46}
& 29.40
& 43.82 \\

Gated DeltaNet
& 36.27
& 27.82
& \textbf{60.02}
& 25.94
& 40.25
& 31.52
& 67.30
& 51.54
& 57.13
& \textbf{34.40}
& 46.01 \\

\cellcolor{qdelta}\textbf{Q-Delta}
& \cellcolor{qdelta}{32.67}
& \cellcolor{qdelta}\secondbest{26.89}
& \cellcolor{qdelta}{59.51}
& \cellcolor{qdelta}{28.50}
& \cellcolor{qdelta}{41.61}
& \cellcolor{qdelta}\textbf{33.90}
& \cellcolor{qdelta}\secondbest{67.63}
& \cellcolor{qdelta}\textbf{52.88}
& \cellcolor{qdelta}\textbf{59.48}
& \cellcolor{qdelta}\textbf{34.40}
& \cellcolor{qdelta}\textbf{47.24} \\

\midrule
\multicolumn{12}{l}{\textit{1.3B parameters, 30B training tokens}} \\
\midrule
RetNet
& 21.84
& 22.45
& 63.68
& 33.36
& 47.73
& 38.70
& 69.04
& 52.72
& 60.61
& 36.60
&  50.31 \\

Mamba
& 16.98 
& 19.89
& 68.10
& 36.18
& \secondbest{53.44}
& 40.77
& \textbf{72.20}
& \secondbest{55.01}
& 55.63
& 37.80
& 52.39 \\

Mamba2
& 17.40
& \secondbest{19.47}
& \textbf{69.87}
& \secondbest{36.35}
& 53.24
& {40.68}
& 70.29
& \textbf{56.04}
& 55.81
& 37.40
& 52.46 \\

DeltaNet
& 16.64
& 19.77
& 67.63
& 34.47
& 51.09
& 41.78
& {70.95}
& 54.70
& \secondbest{61.19}
& \secondbest{38.40}
& {52.53} \\

Gated DeltaNet
& \secondbest{15.32}
& 19.61
& \secondbest{68.60}
& 33.28
& {52.60}
& \textbf{43.80}
& {70.84}
& 54.78
& 59.42
& \textbf{38.80}
& \secondbest{52.77} \\

\cellcolor{qdelta}\textbf{Q-Delta}
& \cellcolor{qdelta}\textbf{15.19}
& \cellcolor{qdelta}\textbf{19.21}
& \cellcolor{qdelta}{68.27}
& \cellcolor{qdelta}\textbf{36.60}
& \cellcolor{qdelta}\textbf{53.46}
& \cellcolor{qdelta}\secondbest{43.28}
& \cellcolor{qdelta}\secondbest{71.44}
& \cellcolor{qdelta}{54.93}
& \cellcolor{qdelta}\textbf{61.41}
& \cellcolor{qdelta}\secondbest{38.40}
& \cellcolor{qdelta}\textbf{53.47} \\

\bottomrule
\end{tabular}
}
\end{table*}

\vspace{-0.3em}
\subsection{Evaluation}

\vspace{-0.2em}
\paragraph{Language Modeling.}
We evaluate commonsense reasoning performance using LM Evaluation Harness~\citep{lintang_sutawika_2024_12608602} to test zero-shot language modeling capacity.
Following standard practice, we report language modeling perplexity on LAMBADA~\citep{paperno2016lambadadatasetwordprediction} and Wikitext~\citep{merity2016pointersentinelmixturemodels}, and zero-shot accuracy on multiple-choice reasoning benchmarks, including BoolQ~\citep{clark-etal-2019-boolq}, HellaSwag~\citep{zellers2019hellaswagmachinereallyfinish}, PIQA~\citep{bisk2019piqareasoningphysicalcommonsense}, Arc-Easy, Arc-Challenge~\citep{clark2018thinksolvedquestionanswering},  WinoGrande~\citep{sakaguchi2019winograndeadversarialwinogradschema}, and OpenBookQA~\citep{mihaylov2018suitarmorconductelectricity}.

From Table~\ref{tbl:main_results}, across both model scales, Q-Delta consistently achieves strong zero-shot performance on commonsense reasoning benchmarks.
At the 340M scale, Q-Delta attains the best average accuracy and improves over other baselines on various reasoning tasks, while maintaining competitive perplexity on both WikiText and LAMBADA.
At 1.3B scale, Q-Delta further strengthens this trend, achieving the highest average score and leading performance on several benchmarks, notably on language modeling tasks, ARC-Challenge, HellaSwag, and BoolQ.
These results indicate that incorporating query-conditioned feedback into state evolution improves both language modeling and zero-shot reasoning ability without task-specific adaptation.

\begin{table*}[t]
\centering
\small
\setlength{\tabcolsep}{5pt}
\caption{Retrieval performance on the synthetic S-NIAH benchmark from RULER~\citet{hsieh2024rulerwhatsrealcontext}, evaluated on 1.3B models.
Results are reported under varying context lengths (1K, 2K, and 4K tokens).
Best results are shown in \textbf{bold} and second-best in \secondbest{underlined}.}
\label{tbl:sniah_1p3b}
\resizebox{0.90\textwidth}{!}{%
\begin{tabular}{lCCCCCCCCCC}
\toprule
\multirow{2}{*}{{Model}}
& \multicolumn{3}{c}{\textbf{S-NIAH-1} (pass-key retrieval)}
& \multicolumn{3}{c}{\textbf{S-NIAH-2} (number in haystack)}
& \multicolumn{3}{c}{\textbf{S-NIAH-3} (uuid in haystack)}
& \multirow{2}{*}{{Avg.}} \\
\cmidrule(lr){2-4} \cmidrule(lr){5-7} \cmidrule(lr){8-10}
 & 1K & 2K & 4K
 & 1K & 2K & 4K
 & 1K & 2K & 4K \\
\midrule

RetNet
 & 96.6 & 27.8 & 7.4
 & 99.4 & 60.8 & 24.4
 & 20.0 & 5.2 & 1.2
 & 38.09 \\

Mamba
 & 99.8 & 99.6 & 87.0
 & 98.8 & 92.8 & 50.8
 & 22.0 & 12.0 & 0.8
 & {62.62} \\

Mamba2
 & \textbf{100.0} & \secondbest{99.8} & \secondbest{99.0}
 & \secondbest{99.8} & {95.4} & {57.0}
 & 76.0 & 50.6 & 11.6
 & 76.58 \\

DeltaNet
 & \textbf{100.0} & \textbf{100.0} & \textbf{100.0}
 & \secondbest{99.8} & 93.6 & 49.6
 & \secondbest{87.4} & \textbf{75.8} & \secondbest{25.4}
 & {81.29} \\

Gated DeltaNet
 & \textbf{100.0} & \textbf{100.0} & \textbf{100.0}
 & {100.0} & \textbf{99.8} & \secondbest{76.6}
 & {83.8} & {70.0} & {21.4}
 & \secondbest{83.51} \\
\cellcolor{qdelta}\textbf{{Q-Delta}}
 & \cellcolor{qdelta}\textbf{100.0} & \cellcolor{qdelta}\textbf{100.0} & \cellcolor{qdelta}{\textbf{100.0}}
 & \cellcolor{qdelta}\textbf{100.0} & \cellcolor{qdelta}\secondbest{99.4} & \cellcolor{qdelta}\textbf{94.2}
 & \cellcolor{qdelta}\textbf{94.6} & \cellcolor{qdelta}\secondbest{74.0} & \cellcolor{qdelta}\textbf{48.0}
 & \cellcolor{qdelta}\textbf{90.02} \\
\bottomrule
\end{tabular}
}
\end{table*}

\paragraph{Real and synthetic retrieval.}
We evaluate retrieval capabilities using both real-world and synthetic benchmarks.
For real-world retrieval, we adopt the recall-intensive tasks~\citep{arora2024just} and evaluate on 340M models.
All real-world retrieval inputs are truncated to a maximum context length of 2K tokens.
For synthetic retrieval, we evaluate 1.3B-parameter models on the S-NIAH (Synthetic Needle-In-A-Haystack) benchmark~\citep{hsieh2024rulerwhatsrealcontext}, which measures model’s ability to retrieve sparse target information embedded at varying positions within long contexts.
We report results under context lengths of 1K, 2K, and 4K tokens to assess generalization beyond the training context.
Together, these benchmarks evaluate complementary aspects of retrieval, ranging from structured real-world recall to controlled long-context information extraction.

\begin{table}[t]
\centering
\small
\setlength{\tabcolsep}{7pt}
\caption{Retrieval performance on real-world recall-intensive tasks from~\citet{arora2024just}, evaluated with 340M-parameter models.
All inputs are truncated to a context length of 2K tokens and formatted in a cloze-style next-token prediction setting.
Best results are shown in \textbf{bold} and second-best in \secondbest{underlined}.}
\vspace{0.2em}
\label{tbl:pla_twice_340m}
\resizebox{0.49\textwidth}{!}{%
\begin{tabular}{lcccccc}
\toprule
Models & SWDE & SQD & FDA & TQA & NQ & Drop \\
\midrule
Mamba
 & 17.1 & 43.6 & 6.4
 & \textbf{55.2} & {17.5}  & 26.4 \\
Mamba2   
& \underline{29.1} 
& \underline{55.3} 
& 18.4 
& {49.1} 
& \secondbest{18.2}
& \underline{33.4} \\

DeltaNet 
& 22.9 
& 51.5 
& 16.7 
& 45.0 
& 15.2 
& 28.2 \\

Gated DeltaNet  
& \underline{29.1} 
& \textbf{56.0} 
& \underline{18.9} 
& \underline{48.9} 
& \secondbest{18.2} 
& \textbf{34.2} \\

\cellcolor{qdelta}\textbf{Q-Delta} 
& \cellcolor{qdelta}\textbf{31.6} 
& \cellcolor{qdelta}52.3 
& \cellcolor{qdelta}\textbf{22.0} 
& \cellcolor{qdelta}48.6 
& \cellcolor{qdelta}\textbf{18.4} 
& \cellcolor{qdelta}33.2 \\

\bottomrule
\end{tabular}
}
\end{table}

On real-world recall-intensive tasks (Table~\ref{tbl:pla_twice_340m}), Q-Delta consistently matches or outperforms prior linear RNN models, achieving the best average score across tasks.
These results suggest that incorporating query-conditioned feedback into state evolution improves both controlled synthetic retrieval and practical real-world recall, while maintaining linear-time scalability.
On the synthetic S-NIAH benchmark (Table~\ref{tbl:sniah_1p3b}), Q-Delta achieves the highest average accuracy among all linear recurrent baselines, with near-perfect performance on pass-key retrieval (S-NIAH-1) across all evaluated context lengths.
Notably, Q-Delta substantially improves performance on the more challenging number-in-haystack and UUID-in-haystack tasks (S-NIAH-2 and S-NIAH-3), particularly at longer contexts up to 4K tokens, indicating stronger robustness to sparse information retrieval especially as context length increases.

\begin{table}[h]
\centering
\caption{Ablation on the query-feedback coefficient $\lambda$ for 340M Q-Delta. {Scalar $\lambda$} tests fixed query-feedback strength versus adaptive learnable modulation, {No state decay} removes the recurrent forget/decay gate ($\alpha_t=1$), and {No gating} uses full query correction ($\lambda_t=1$), disabling adaptive gating of the query-feedback term.}
\vspace{0.5em}
\label{tbl:ablation_lmb}
\resizebox{0.48\textwidth}{!}{%
\begin{tabular}{lccc}
\toprule
$\lambda$ & \textbf{Wiki ppl. $\downarrow$}  & \textbf{Lamb ppl. $\downarrow$} & \textbf{Avg Acc. (8 tasks) $\uparrow$} \\
\midrule
\cellcolor{qdelta}Learnable $\lambda_t$ (Q-Delta) 
& \cellcolor{qdelta}26.89 & \cellcolor{qdelta}{32.67}  & \cellcolor{qdelta}{47.24} \\
{Scalar $\lambda = 0.2$} & {26.96} & {35.39} & {46.99} \\
Scalar $\lambda = 0.5$ & 26.86 & 33.31 & 47.20 \\
Scalar $\lambda = 0.8$ & 26.61 & 33.58 & 46.42 \\
{No state decay ($\alpha_t = 1.0$\,)} & {26.52} & {32.97} & {45.86} \\
No gating ($\lambda_t = 1.0$\,)  & 26.55 & 35.21 & 46.36 \\
\bottomrule
\vspace{-2em}
\end{tabular}}
\end{table}

\paragraph{Ablation Studies.}
Given Q-Delta recurrence rule
$S_t = \alpha_t S_{t-1}\big(I - \beta_t(k_t + \lambda_t q_t)k_t^\top\big) + \beta_t v_t k_t^\top$,
the query-conditioned correction is governed solely by $\lambda_t$, which scales the
query-feedback term $\lambda_t q_t k_t^\top$. Since the query-based state correction is
our central contribution, $\lambda_t$ is the natural ablation target, and we additionally
ablate the decay factor $\alpha_t$ (Table~\ref{tbl:ablation_lmb}).
Across fixed scalar values, performance is relatively robust, with $\lambda=0.5$ giving
the best scalar result (47.20 average accuracy) while learning $\lambda_t$ end-to-end yields
the best overall performance, maintaining strong perplexity. This indicates that
adaptively modulating the query-feedback strength is beneficial.

Removing the decay factor ($\alpha_t=1$) lowers accuracy to $45.86$ but remains clearly
above its most direct non-decay baseline, DeltaNet ($43.82$ at the same 340M scale),
indicating that query feedback is beneficial independent of the gating mechanism.
Overall, query feedback contributes consistently across settings, and allowing the
model to tune its strength gives the best trade-off.

\section{Conclusion}
This work reconsiders a core assumption in linear attention and recurrent sequence
models: that state evolution is governed solely by key--value association, with queries
confined to passive readout. We observe that the query is the direction along which the state is
read out, so the query-conditioned value prediction $\hat{o}_t = S_{t-1}q_t$ is a readout-aligned
state correction signal that conventional delta-rule updates leave uncorrected. Building on this, we propose
Q-Delta, a query-aware delta rule that injects this prediction error into state evolution
while preserving linear-time efficiency, and we establish theoretical justification that the resulting mixed
key--query dynamics are stable under mild, empirically verified conditions. Q-Delta
consistently improves over strong linear-attention and SSM baselines, showing that
incorporating query into recurrent state update is an effective way to move beyond pure key--value association.

\clearpage

\section*{Acknowledgements}
This work was partly supported by the Institute for Information \& Communications Technology Planning \& Evaluation (IITP) grants funded by the Korean government (MSIT) (No. RS-2026-25526850, High-Efficiency Neural Networks for Artificial General Intelligence, 33\%; No.2022-0-00857, Development of Financial and Economic Digital Twin Platform based on AI and Data, 33\%; No. RS-2025-25442149, LG AI STAR Talent Development Program for Leading Large-Scale Generative AI Models in the Physical AI Domain, 1\%), and Samsung Research Funding \& Incubation Center of Samsung Electronics under Project Number SRFC-IT2402-08, 33\%.

\section*{Impact Statement}
This work proposes Q-Delta, a novel query-aware delta rule that enriches linear-time sequential models, enabling rich state dynamics by integrating complementary key–query signals into state evolution.
The research supports scalable language modeling and long-context applications, improving expressivity and interpretability of linear attention frameworks. As with other large-scale sequence models, potential risks such as misuse for misinformation generation or unintended memorization of sensitive data may arise in wide applications, so maintaining responsible training, evaluation, and deployment practices is important.

\bibliographystyle{icml2026}
\bibliography{references}

\begin{thebibliography}{45}
\providecommand{\natexlab}[1]{#1}
\providecommand{\url}[1]{\texttt{#1}}
\expandafter\ifx\csname urlstyle\endcsname\relax
  \providecommand{\doi}[1]{doi: #1}\else
  \providecommand{\doi}{doi: \begingroup \urlstyle{rm}\Url}\fi

\bibitem[Arora et~al.(2023)Arora, Eyuboglu, Timalsina, Johnson, Poli, Zou, Rudra, and Ré]{arora2023zoologymeasuringimprovingrecall}
Arora, S., Eyuboglu, S., Timalsina, A., Johnson, I., Poli, M., Zou, J., Rudra, A., and Ré, C.
\newblock Zoology: Measuring and improving recall in efficient language models, 2023.
\newblock URL \url{https://arxiv.org/abs/2312.04927}.

\bibitem[Arora et~al.(2024)Arora, Timalsina, Singhal, Eyuboglu, Zhao, Rao, Rudra, and Ré]{arora2024just}
Arora, S., Timalsina, A., Singhal, A., Eyuboglu, S., Zhao, X., Rao, A., Rudra, A., and Ré, C.
\newblock Just read twice: closing the recall gap for recurrent language models.
\newblock 2024.

\bibitem[Behrouz et~al.(2024)Behrouz, Zhong, and Mirrokni]{behrouz2024titanslearningmemorizetest}
Behrouz, A., Zhong, P., and Mirrokni, V.
\newblock Titans: Learning to memorize at test time, 2024.
\newblock URL \url{https://arxiv.org/abs/2501.00663}.

\bibitem[Bischof \& Loan(1985)Bischof and Loan]{Bischof1985TheWR}
Bischof, C.~H. and Loan, C.~V.
\newblock The wy representation for products of householder matrices.
\newblock In \emph{PP}, 1985.
\newblock URL \url{https://api.semanticscholar.org/CorpusID:36094006}.

\bibitem[Bisk et~al.(2019)Bisk, Zellers, Bras, Gao, and Choi]{bisk2019piqareasoningphysicalcommonsense}
Bisk, Y., Zellers, R., Bras, R.~L., Gao, J., and Choi, Y.
\newblock Piqa: Reasoning about physical commonsense in natural language, 2019.
\newblock URL \url{https://arxiv.org/abs/1911.11641}.

\bibitem[Brown et~al.(2020)Brown, Mann, Ryder, Subbiah, Kaplan, Dhariwal, Neelakantan, Shyam, Sastry, Askell, Agarwal, Herbert-Voss, Krueger, Henighan, Child, Ramesh, Ziegler, Wu, Winter, Hesse, Chen, Sigler, Litwin, Gray, Chess, Clark, Berner, McCandlish, Radford, Sutskever, and Amodei]{brown2020languagemodelsfewshotlearners}
Brown, T.~B., Mann, B., Ryder, N., Subbiah, M., Kaplan, J., Dhariwal, P., Neelakantan, A., Shyam, P., Sastry, G., Askell, A., Agarwal, S., Herbert-Voss, A., Krueger, G., Henighan, T., Child, R., Ramesh, A., Ziegler, D.~M., Wu, J., Winter, C., Hesse, C., Chen, M., Sigler, E., Litwin, M., Gray, S., Chess, B., Clark, J., Berner, C., McCandlish, S., Radford, A., Sutskever, I., and Amodei, D.
\newblock Language models are few-shot learners, 2020.
\newblock URL \url{https://arxiv.org/abs/2005.14165}.

\bibitem[Chevalier(2018)]{chevalier2018larnnlinearattentionrecurrent}
Chevalier, G.
\newblock Larnn: Linear attention recurrent neural network, 2018.
\newblock URL \url{https://arxiv.org/abs/1808.05578}.

\bibitem[Clark et~al.(2019)Clark, Lee, Chang, Kwiatkowski, Collins, and Toutanova]{clark-etal-2019-boolq}
Clark, C., Lee, K., Chang, M.-W., Kwiatkowski, T., Collins, M., and Toutanova, K.
\newblock {B}ool{Q}: Exploring the surprising difficulty of natural yes/no questions.
\newblock In Burstein, J., Doran, C., and Solorio, T. (eds.), \emph{Proceedings of the 2019 Conference of the North {A}merican Chapter of the Association for Computational Linguistics: Human Language Technologies, Volume 1 (Long and Short Papers)}, pp.\  2924--2936, Minneapolis, Minnesota, June 2019. Association for Computational Linguistics.
\newblock \doi{10.18653/v1/N19-1300}.
\newblock URL \url{https://aclanthology.org/N19-1300/}.

\bibitem[Clark et~al.(2018)Clark, Cowhey, Etzioni, Khot, Sabharwal, Schoenick, and Tafjord]{clark2018thinksolvedquestionanswering}
Clark, P., Cowhey, I., Etzioni, O., Khot, T., Sabharwal, A., Schoenick, C., and Tafjord, O.
\newblock Think you have solved question answering? try arc, the ai2 reasoning challenge, 2018.
\newblock URL \url{https://arxiv.org/abs/1803.05457}.

\bibitem[Dao \& Gu(2024)Dao and Gu]{dao2024transformersssmsgeneralizedmodels}
Dao, T. and Gu, A.
\newblock Transformers are ssms: Generalized models and efficient algorithms through structured state space duality, 2024.
\newblock URL \url{https://arxiv.org/abs/2405.21060}.

\bibitem[Dominguez \& Orti(2018)Dominguez and Orti]{dominguez2018fast}
Dominguez, A. E.~T. and Orti, E. S.~Q.
\newblock Fast blocking of householder reflectors on graphics processors.
\newblock In \emph{2018 26th Euromicro International Conference on Parallel, Distributed and Network-based Processing (PDP)}, pp.\  385--393. IEEE, 2018.

\bibitem[Dua et~al.(2019)Dua, Wang, Dasigi, Stanovsky, Singh, and Gardner]{dua-etal-2019-drop}
Dua, D., Wang, Y., Dasigi, P., Stanovsky, G., Singh, S., and Gardner, M.
\newblock {DROP}: A reading comprehension benchmark requiring discrete reasoning over paragraphs.
\newblock In Burstein, J., Doran, C., and Solorio, T. (eds.), \emph{Proceedings of the 2019 Conference of the North {A}merican Chapter of the Association for Computational Linguistics: Human Language Technologies, Volume 1 (Long and Short Papers)}, pp.\  2368--2378, Minneapolis, Minnesota, June 2019. Association for Computational Linguistics.
\newblock \doi{10.18653/v1/N19-1246}.
\newblock URL \url{https://aclanthology.org/N19-1246/}.

\bibitem[Gao et~al.(2024)Gao, Tow, Abbasi, Biderman, Black, DiPofi, Foster, Golding, Hsu, Le~Noac'h, Li, McDonell, Muennighoff, Ociepa, Phang, Reynolds, Schoelkopf, Skowron, Sutawika, Tang, Thite, Wang, Wang, and Zou]{eval-harness}
Gao, L., Tow, J., Abbasi, B., Biderman, S., Black, S., DiPofi, A., Foster, C., Golding, L., Hsu, J., Le~Noac'h, A., Li, H., McDonell, K., Muennighoff, N., Ociepa, C., Phang, J., Reynolds, L., Schoelkopf, H., Skowron, A., Sutawika, L., Tang, E., Thite, A., Wang, B., Wang, K., and Zou, A.
\newblock The language model evaluation harness, 07 2024.
\newblock URL \url{https://zenodo.org/records/12608602}.

\bibitem[Gu \& Dao(2024)Gu and Dao]{gu2024mambalineartimesequencemodeling}
Gu, A. and Dao, T.
\newblock Mamba: Linear-time sequence modeling with selective state spaces, 2024.
\newblock URL \url{https://arxiv.org/abs/2312.00752}.

\bibitem[Gu et~al.(2020)Gu, Dao, Ermon, Rudra, and Re]{gu2020hipporecurrentmemoryoptimal}
Gu, A., Dao, T., Ermon, S., Rudra, A., and Re, C.
\newblock Hippo: Recurrent memory with optimal polynomial projections, 2020.
\newblock URL \url{https://arxiv.org/abs/2008.07669}.

\bibitem[Hsieh et~al.(2024)Hsieh, Sun, Kriman, Acharya, Rekesh, Jia, Zhang, and Ginsburg]{hsieh2024rulerwhatsrealcontext}
Hsieh, C.-P., Sun, S., Kriman, S., Acharya, S., Rekesh, D., Jia, F., Zhang, Y., and Ginsburg, B.
\newblock Ruler: What's the real context size of your long-context language models?, 2024.
\newblock URL \url{https://arxiv.org/abs/2404.06654}.

\bibitem[Hu et~al.(2025)Hu, Pan, Du, Lan, Tang, Wen, Liang, and Sun]{hu2025combaimprovingbilinearrnns}
Hu, J., Pan, Y., Du, J., Lan, D., Tang, X., Wen, Q., Liang, Y., and Sun, W.
\newblock Comba: Improving bilinear rnns with closed-loop control, 2025.
\newblock URL \url{https://arxiv.org/abs/2506.02475}.

\bibitem[Joffrain et~al.(2006)Joffrain, Low, Quintana-Ort{\'\i}, Geijn, and Zee]{joffrain2006accumulating}
Joffrain, T., Low, T.~M., Quintana-Ort{\'\i}, E.~S., Geijn, R. v.~d., and Zee, F. G.~V.
\newblock Accumulating householder transformations, revisited.
\newblock \emph{ACM Transactions on Mathematical Software (TOMS)}, 32\penalty0 (2):\penalty0 169--179, 2006.

\bibitem[Joshi et~al.(2017)Joshi, Choi, Weld, and Zettlemoyer]{joshi-etal-2017-triviaqa}
Joshi, M., Choi, E., Weld, D., and Zettlemoyer, L.
\newblock {T}rivia{QA}: A large scale distantly supervised challenge dataset for reading comprehension.
\newblock In Barzilay, R. and Kan, M.-Y. (eds.), \emph{Proceedings of the 55th Annual Meeting of the Association for Computational Linguistics (Volume 1: Long Papers)}, pp.\  1601--1611, Vancouver, Canada, July 2017. Association for Computational Linguistics.
\newblock \doi{10.18653/v1/P17-1147}.
\newblock URL \url{https://aclanthology.org/P17-1147/}.

\bibitem[Katharopoulos et~al.(2020)Katharopoulos, Vyas, Pappas, and Fleuret]{katharopoulos2020transformersrnnsfastautoregressive}
Katharopoulos, A., Vyas, A., Pappas, N., and Fleuret, F.
\newblock Transformers are rnns: Fast autoregressive transformers with linear attention, 2020.
\newblock URL \url{https://arxiv.org/abs/2006.16236}.

\bibitem[Kitaev et~al.(2020)Kitaev, Łukasz Kaiser, and Levskaya]{kitaev2020reformerefficienttransformer}
Kitaev, N., Łukasz Kaiser, and Levskaya, A.
\newblock Reformer: The efficient transformer, 2020.
\newblock URL \url{https://arxiv.org/abs/2001.04451}.

\bibitem[Kwiatkowski et~al.(2019)Kwiatkowski, Palomaki, Redfield, Collins, Parikh, Alberti, Epstein, Polosukhin, Devlin, Lee, Toutanova, Jones, Kelcey, Chang, Dai, Uszkoreit, Le, and Petrov]{kwiatkowski-etal-2019-natural}
Kwiatkowski, T., Palomaki, J., Redfield, O., Collins, M., Parikh, A., Alberti, C., Epstein, D., Polosukhin, I., Devlin, J., Lee, K., Toutanova, K., Jones, L., Kelcey, M., Chang, M.-W., Dai, A.~M., Uszkoreit, J., Le, Q., and Petrov, S.
\newblock Natural questions: A benchmark for question answering research.
\newblock \emph{Transactions of the Association for Computational Linguistics}, 7:\penalty0 452--466, 2019.
\newblock \doi{10.1162/tacl_a_00276}.
\newblock URL \url{https://aclanthology.org/Q19-1026/}.

\bibitem[Liu et~al.(2024)Liu, Wang, Wu, Feng, Stone, and Liu]{liu2024longhornstatespacemodels}
Liu, B., Wang, R., Wu, L., Feng, Y., Stone, P., and Liu, Q.
\newblock Longhorn: State space models are amortized online learners, 2024.
\newblock URL \url{https://arxiv.org/abs/2407.14207}.

\bibitem[Lockard et~al.(2019)Lockard, Shiralkar, and Dong]{lockard-etal-2019-openceres}
Lockard, C., Shiralkar, P., and Dong, X.~L.
\newblock {O}pen{C}eres: {W}hen open information extraction meets the semi-structured web.
\newblock In Burstein, J., Doran, C., and Solorio, T. (eds.), \emph{Proceedings of the 2019 Conference of the North {A}merican Chapter of the Association for Computational Linguistics: Human Language Technologies, Volume 1 (Long and Short Papers)}, pp.\  3047--3056, Minneapolis, Minnesota, June 2019. Association for Computational Linguistics.
\newblock \doi{10.18653/v1/N19-1309}.
\newblock URL \url{https://aclanthology.org/N19-1309/}.

\bibitem[Merity et~al.(2016)Merity, Xiong, Bradbury, and Socher]{merity2016pointersentinelmixturemodels}
Merity, S., Xiong, C., Bradbury, J., and Socher, R.
\newblock Pointer sentinel mixture models, 2016.
\newblock URL \url{https://arxiv.org/abs/1609.07843}.

\bibitem[Mihaylov et~al.(2018)Mihaylov, Clark, Khot, and Sabharwal]{mihaylov2018suitarmorconductelectricity}
Mihaylov, T., Clark, P., Khot, T., and Sabharwal, A.
\newblock Can a suit of armor conduct electricity? a new dataset for open book question answering, 2018.
\newblock URL \url{https://arxiv.org/abs/1809.02789}.

\bibitem[Olsson et~al.(2022)Olsson, Elhage, Nanda, Joseph, DasSarma, Henighan, Mann, Askell, Bai, Chen, Conerly, Drain, Ganguli, Hatfield-Dodds, Hernandez, Johnston, Jones, Kernion, Lovitt, Ndousse, Amodei, Brown, Clark, Kaplan, McCandlish, and Olah]{olsson2022incontextlearninginductionheads}
Olsson, C., Elhage, N., Nanda, N., Joseph, N., DasSarma, N., Henighan, T., Mann, B., Askell, A., Bai, Y., Chen, A., Conerly, T., Drain, D., Ganguli, D., Hatfield-Dodds, Z., Hernandez, D., Johnston, S., Jones, A., Kernion, J., Lovitt, L., Ndousse, K., Amodei, D., Brown, T., Clark, J., Kaplan, J., McCandlish, S., and Olah, C.
\newblock In-context learning and induction heads, 2022.
\newblock URL \url{https://arxiv.org/abs/2209.11895}.

\bibitem[Paperno et~al.(2016)Paperno, Kruszewski, Lazaridou, Pham, Bernardi, Pezzelle, Baroni, Boleda, and Fernández]{paperno2016lambadadatasetwordprediction}
Paperno, D., Kruszewski, G., Lazaridou, A., Pham, Q.~N., Bernardi, R., Pezzelle, S., Baroni, M., Boleda, G., and Fernández, R.
\newblock The lambada dataset: Word prediction requiring a broad discourse context, 2016.
\newblock URL \url{https://arxiv.org/abs/1606.06031}.

\bibitem[Penedo et~al.(2024)Penedo, Kydlíček, allal, Lozhkov, Mitchell, Raffel, Werra, and Wolf]{penedo2024finewebdatasetsdecantingweb}
Penedo, G., Kydlíček, H., allal, L.~B., Lozhkov, A., Mitchell, M., Raffel, C., Werra, L.~V., and Wolf, T.
\newblock The fineweb datasets: Decanting the web for the finest text data at scale, 2024.
\newblock URL \url{https://arxiv.org/abs/2406.17557}.

\bibitem[Rajpurkar et~al.(2018)Rajpurkar, Jia, and Liang]{rajpurkar-etal-2018-know}
Rajpurkar, P., Jia, R., and Liang, P.
\newblock Know what you don{'}t know: Unanswerable questions for {SQ}u{AD}.
\newblock In Gurevych, I. and Miyao, Y. (eds.), \emph{Proceedings of the 56th Annual Meeting of the Association for Computational Linguistics (Volume 2: Short Papers)}, pp.\  784--789, Melbourne, Australia, July 2018. Association for Computational Linguistics.
\newblock \doi{10.18653/v1/P18-2124}.
\newblock URL \url{https://aclanthology.org/P18-2124/}.

\bibitem[Sakaguchi et~al.(2019)Sakaguchi, Bras, Bhagavatula, and Choi]{sakaguchi2019winograndeadversarialwinogradschema}
Sakaguchi, K., Bras, R.~L., Bhagavatula, C., and Choi, Y.
\newblock Winogrande: An adversarial winograd schema challenge at scale, 2019.
\newblock URL \url{https://arxiv.org/abs/1907.10641}.

\bibitem[Schlag et~al.(2021)Schlag, Irie, and Schmidhuber]{pmlr-v139-schlag21a}
Schlag, I., Irie, K., and Schmidhuber, J.
\newblock Linear transformers are secretly fast weight programmers.
\newblock In Meila, M. and Zhang, T. (eds.), \emph{Proceedings of the 38th International Conference on Machine Learning}, volume 139 of \emph{Proceedings of Machine Learning Research}, pp.\  9355--9366. PMLR, 18--24 Jul 2021.
\newblock URL \url{https://proceedings.mlr.press/v139/schlag21a.html}.

\bibitem[Smith et~al.(2023)Smith, Warrington, and Linderman]{smith2023simplifiedstatespacelayers}
Smith, J. T.~H., Warrington, A., and Linderman, S.~W.
\newblock Simplified state space layers for sequence modeling, 2023.
\newblock URL \url{https://arxiv.org/abs/2208.04933}.

\bibitem[Sun et~al.(2023)Sun, Dong, Huang, Ma, Xia, Xue, Wang, and Wei]{sun2023retentivenetworksuccessortransformer}
Sun, Y., Dong, L., Huang, S., Ma, S., Xia, Y., Xue, J., Wang, J., and Wei, F.
\newblock Retentive network: A successor to transformer for large language models, 2023.
\newblock URL \url{https://arxiv.org/abs/2307.08621}.

\bibitem[Sun et~al.(2024)Sun, Dong, Zhu, Huang, Wang, Ma, Zhang, Wang, and Wei]{sun2024cacheoncedecoderdecoderarchitectures}
Sun, Y., Dong, L., Zhu, Y., Huang, S., Wang, W., Ma, S., Zhang, Q., Wang, J., and Wei, F.
\newblock You only cache once: Decoder-decoder architectures for language models, 2024.
\newblock URL \url{https://arxiv.org/abs/2405.05254}.

\bibitem[Sun et~al.(2025)Sun, Li, Dalal, Xu, Vikram, Zhang, Dubois, Chen, Wang, Koyejo, Hashimoto, and Guestrin]{sun2025learninglearntesttime}
Sun, Y., Li, X., Dalal, K., Xu, J., Vikram, A., Zhang, G., Dubois, Y., Chen, X., Wang, X., Koyejo, S., Hashimoto, T., and Guestrin, C.
\newblock Learning to (learn at test time): Rnns with expressive hidden states, 2025.
\newblock URL \url{https://arxiv.org/abs/2407.04620}.

\bibitem[Sutawika et~al.(2024)Sutawika, Schoelkopf, Gao, Abbasi, Biderman, Tow, ben fattori, Lovering, farzanehnakhaee70, Phang, Thite, Fazz, Aflah, Muennighoff, Wang, sdtblck, nopperl, gakada, tttyuntian, researcher2, Etxaniz, Chris, Lee, Kasner, Khalid, LSinev, Hsu, Kanekar, KonradSzafer, and AndyZwei]{lintang_sutawika_2024_12608602}
Sutawika, L., Schoelkopf, H., Gao, L., Abbasi, B., Biderman, S., Tow, J., ben fattori, Lovering, C., farzanehnakhaee70, Phang, J., Thite, A., Fazz, Aflah, Muennighoff, N., Wang, T., sdtblck, nopperl, gakada, tttyuntian, researcher2, Etxaniz, J., Chris, Lee, H.~A., Kasner, Z., Khalid, LSinev, Hsu, J., Kanekar, A., KonradSzafer, and AndyZwei.
\newblock Eleutherai/lm-evaluation-harness: v0.4.3, July 2024.
\newblock URL \url{https://doi.org/10.5281/zenodo.12608602}.

\bibitem[Tillet et~al.(2019)Tillet, Kung, and Cox]{tillet2019triton}
Tillet, P., Kung, H.-T., and Cox, D.
\newblock Triton: an intermediate language and compiler for tiled neural network computations.
\newblock In \emph{Proceedings of the 3rd ACM SIGPLAN International Workshop on Machine Learning and Programming Languages}, pp.\  10--19, 2019.

\bibitem[Vaswani et~al.(2023)Vaswani, Shazeer, Parmar, Uszkoreit, Jones, Gomez, Kaiser, and Polosukhin]{vaswani2023attentionneed}
Vaswani, A., Shazeer, N., Parmar, N., Uszkoreit, J., Jones, L., Gomez, A.~N., Kaiser, L., and Polosukhin, I.
\newblock Attention is all you need, 2023.
\newblock URL \url{https://arxiv.org/abs/1706.03762}.

\bibitem[Wang et~al.(2020)Wang, Li, Khabsa, Fang, and Ma]{wang2020linformerselfattentionlinearcomplexity}
Wang, S., Li, B.~Z., Khabsa, M., Fang, H., and Ma, H.
\newblock Linformer: Self-attention with linear complexity, 2020.
\newblock URL \url{https://arxiv.org/abs/2006.04768}.

\bibitem[Yang \& Zhang(2024)Yang and Zhang]{yang2024fla}
Yang, S. and Zhang, Y.
\newblock Fla: A triton-based library for hardware-efficient implementations of linear attention mechanism, January 2024.
\newblock URL \url{https://github.com/fla-org/flash-linear-attention}.

\bibitem[Yang et~al.(2024)Yang, Wang, Shen, Panda, and Kim]{yang2024gatedlinearattentiontransformers}
Yang, S., Wang, B., Shen, Y., Panda, R., and Kim, Y.
\newblock Gated linear attention transformers with hardware-efficient training, 2024.
\newblock URL \url{https://arxiv.org/abs/2312.06635}.

\bibitem[Yang et~al.(2025{\natexlab{a}})Yang, Kautz, and Hatamizadeh]{yang2025gateddeltanetworksimproving}
Yang, S., Kautz, J., and Hatamizadeh, A.
\newblock Gated delta networks: Improving mamba2 with delta rule, 2025{\natexlab{a}}.
\newblock URL \url{https://arxiv.org/abs/2412.06464}.

\bibitem[Yang et~al.(2025{\natexlab{b}})Yang, Wang, Zhang, Shen, and Kim]{yang2025parallelizinglineartransformersdelta}
Yang, S., Wang, B., Zhang, Y., Shen, Y., and Kim, Y.
\newblock Parallelizing linear transformers with the delta rule over sequence length, 2025{\natexlab{b}}.
\newblock URL \url{https://arxiv.org/abs/2406.06484}.

\bibitem[Zellers et~al.(2019)Zellers, Holtzman, Bisk, Farhadi, and Choi]{zellers2019hellaswagmachinereallyfinish}
Zellers, R., Holtzman, A., Bisk, Y., Farhadi, A., and Choi, Y.
\newblock Hellaswag: Can a machine really finish your sentence?, 2019.
\newblock URL \url{https://arxiv.org/abs/1905.07830}.

\end{thebibliography}

\newpage
\appendix
\onecolumn

\section{Datasets}~\label{app:dataset}
\vspace{-2em}
\paragraph{Commonsense Reasoning.}
We evaluate on zero-shot commonsense reasoning benchmarks.
For multiple-choice tasks, we report task accuracy on PIQA~\citep{bisk2019piqareasoningphysicalcommonsense}, HellaSwag~\citep{zellers2019hellaswagmachinereallyfinish}, WinoGrande~\citep{sakaguchi2019winograndeadversarialwinogradschema}, ARC-Easy, ARC-Challenge~\citep{clark2018thinksolvedquestionanswering}, OpenBookQA~\citep{mihaylov2018suitarmorconductelectricity}, and BoolQ~\citep{clark-etal-2019-boolq}, as well as language modeling tasks on WikiText~\citep{merity2016pointersentinelmixturemodels} and LAMBADA~\citep{paperno2016lambadadatasetwordprediction}.
All evaluations are conducted in a zero-shot setting using the LM Evaluation Harness~\citep{eval-harness}.

\paragraph{In-Context Retrieval.}
To assess retrieval capacities, we consider both synthetic and real-world in-context retrieval benchmarks.
For synthetic evaluation, we use the Needle-In-A-Haystack Single (NIAH-S) benchmark from RULER~\citep{hsieh2024rulerwhatsrealcontext}, which consists of three tasks, passkey retrieval (S-NIAH-1), numerical needle retrieval (S-NIAH-2), and word-based needle retrieval (S-NIAH-3).
These tasks evaluate a model’s ability to recover sparse target information embedded at arbitrary positions within long contexts.
For real-world retrieval, we follow the evaluation protocol introduced by~\cite{arora2024just}.
These include SWDE~\citep{lockard-etal-2019-openceres} for structured HTML relation extraction, FDA~\citep{arora2023zoologymeasuringimprovingrecall} for key--value retrieval from PDFs, and several question-answering datasets such as SQuAD~\citep{rajpurkar-etal-2018-know}, TriviaQA~\citep{joshi-etal-2017-triviaqa}, DROP~\citep{dua-etal-2019-drop}, and Natural Questions (NQ)~\citep{kwiatkowski-etal-2019-natural}.
All real-world retrieval inputs are truncated to a maximum context length of 2K tokens.
Since our pretrained models are not instruction-tuned, we adopt cloze completion prompts as provided by prior work~\citep{yang2025gateddeltanetworksimproving}.

\section{Query-Feedback Coefficient $\lambda_t$}
\label{app:lambimpl}
The coefficient $\lambda_t$ is computed per head from the hidden state as
$\lambda_t = \sigma(W_\lambda h_t + b)$, where $\sigma$ is the logistic sigmoid,
$W_\lambda \in \mathbb{R}^{H \times d_{\text{model}}}$ with $H$ the number of heads,
and $b$ is a scalar bias initialized to $-0.8$. 
This adds the single projection $W_\lambda$ over the gated delta rule, introducing no additional recurrent state or $q,k,v$ transformation.

\section{Theoretical Derivations}
\subsection{Query for Value Prediction}~\label{app:qvp_proof}
We consider the linear recurrence
\begin{equation}
\label{eq:appendix_recurrence}
S_t = S_{t-1} P_t + \eta_t v_t k_t^\top,
\qquad
S_0 = 0,
\end{equation}
where $S_t \in \mathbb{R}^{d_v \times d_k}$, $P_t \in \mathbb{R}^{d_k \times d_k}$ is a linear operator, $v_t \in \mathbb{R}^{d_v}$, $k_t \in \mathbb{R}^{d_k}$, and $\eta_t \in \mathbb{R}$. 

We show by induction that for all $t \ge 1$, there exist vectors
$\{b_{\tau,t}\}_{\tau \le t} \subset \mathbb{R}^{d_k}$ such that
\begin{equation}
\label{eq:appendix_state_decomp}
S_t = \sum_{\tau=1}^{t} v_\tau b_{\tau,t}^\top.
\end{equation}
For $t=0$, $S_0 = 0$ and the claim holds trivially.
Assume that Eq.~\eqref{eq:appendix_state_decomp} holds for $S_{t-1}$, i.e.,
\[
S_{t-1} = \sum_{\tau=1}^{t-1} v_\tau b_{\tau,t-1}^\top.
\]
Substituting into Eq.~\eqref{eq:appendix_recurrence} gives
\begin{align*}
S_t
&= S_{t-1} P_t + \eta_t v_t k_t^\top \\
&= \left( \sum_{\tau=1}^{t-1} v_\tau b_{\tau,t-1}^\top \right) P_t
   + \eta_t v_t k_t^\top \\
&= \sum_{\tau=1}^{t-1} v_\tau \bigl(b_{\tau,t-1}^\top P_t\bigr)
   + v_t (\eta_t k_t)^\top.
\end{align*}
Using the identity $b^\top P = (P^\top b)^\top$, define
\[
b_{\tau,t} := P_t^\top b_{\tau,t-1} \in \mathbb{R}^{d_k}, \quad \tau = 1,\dots,t-1,
\qquad
b_{t,t} := \eta_t k_t.
\]
Then
\[
S_t = \sum_{\tau=1}^{t} v_\tau b_{\tau,t}^\top,
\]
which completes the induction.

For any query $q_t \in \mathbb{R}^{d_k}$, the query-conditioned prediction satisfies
\begin{align*}
\hat{o}_t := S_{t-1} q_t
&= \sum_{\tau=1}^{t-1} v_\tau (b_{\tau,t-1}^\top q_t) \\
&= \sum_{\tau=1}^{t-1} \gamma_{\tau,t} v_\tau,
\qquad
\gamma_{\tau,t} := b_{\tau,t-1}^\top q_t \in \mathbb{R}.
\end{align*}

\subsection{Stability Analysis of Q-Delta Dynamics}~\label{app:stability_proof}

\onesteplemma*

\begin{proof}
Let $x_t := k_t+\lambda_t q_t$ and consider the Q-Delta update
\[
S_t = S_{t-1} + \beta_t\bigl(v_t - S_{t-1}x_t\bigr)k_t^\top .
\]
Right-multiply both sides by $x_t$ to obtain
\[
S_t x_t
=
S_{t-1}x_t + \beta_t\bigl(v_t - S_{t-1}x_t\bigr)k_t^\top x_t.
\]
Define the scalar alignment $a_t := k_t^\top x_t$. Then
\[
S_t x_t
=
S_{t-1}x_t + \beta_t a_t\bigl(v_t - S_{t-1}x_t\bigr).
\]
Rearranging gives the exact identity
\[
v_t - S_t x_t
=
v_t - S_{t-1}x_t - \beta_t a_t\bigl(v_t - S_{t-1}x_t\bigr)
=
(1-\beta_t a_t)\bigl(v_t - S_{t-1}x_t\bigr).
\]
Taking norms yields
\[
\|v_t - S_t x_t\|
=
|1-\beta_t a_t|\;\|v_t - S_{t-1}x_t\|.
\]
By assumption $\beta_t a_t\in(0,2)$ almost surely, hence $|1-\beta_t a_t|<1$.
With $\rho := \sup_t |1-\beta_t a_t|\in(0,1)$, we therefore have
\[
\|v_t - S_t x_t\|
\le
\rho\,\|v_t - S_{t-1}x_t\|
\qquad\text{almost surely for all }t.
\]
\end{proof}

\globalcontracprop*

\begin{proof}
By definition of the Q-Delta update,
\[
S_t = S_{t-1} + \beta_t\bigl(v_t - S_{t-1}x_t\bigr)k_t^\top.
\]
Right-multiplying by $x_t$ gives
\[
S_t x_t
=
S_{t-1}x_t + \beta_t\bigl(v_t - S_{t-1}x_t\bigr)k_t^\top x_t
=
S_{t-1}x_t + \beta_t a_t\bigl(v_t - S_{t-1}x_t\bigr),
\]
where $a_t := k_t^\top x_t$. Rearranging and using the definitions
\[
\tilde e_t := v_t - S_{t-1}x_t,
\qquad
e_t := v_t - S_t x_t,
\]
yields the exact identity
\[
e_t
=
v_t - S_t x_t
=
v_t - \Bigl(S_{t-1}x_t + \beta_t a_t (v_t - S_{t-1}x_t)\Bigr)
=
(1-\beta_t a_t)\tilde e_t.
\]
Taking norms and using $\rho := \sup_t |1-\beta_t a_t|<1$ gives
\begin{equation}
\label{eq:et_contr_pre}
\|e_t\| \le \rho\,\|\tilde e_t\|\qquad \forall t.
\end{equation}

Next, starting from $\tilde e_t = v_t - S_{t-1}x_t$, add and subtract $v_{t-1}$ and $S_{t-1}x_{t-1}$:
\[
\tilde e_t
=
(v_{t-1} - S_{t-1}x_{t-1})
+
(v_t - v_{t-1})
-
S_{t-1}(x_t - x_{t-1}).
\]
Recognize that $v_{t-1} - S_{t-1}x_{t-1} = e_{t-1}$, and define the target drift
$\Delta_t v := v_t - v_{t-1}$ and the prediction drift
$\Delta_t p := S_{t-1}(x_t - x_{t-1})$ to obtain
\[
\tilde e_t = e_{t-1} + \Delta_t v - \Delta_t p
= e_{t-1} + r_t,
\]
where the residual drift is $r_t := \Delta_t v - \Delta_t p$.
Taking norms and applying the triangle inequality gives
\begin{equation}
\label{eq:pre_bound_rt}
\|\tilde e_t\|
\le
\|e_{t-1}\| + \|r_t\|.
\end{equation}

Combining \eqref{eq:et_contr_pre} and \eqref{eq:pre_bound_rt} yields
\[
\|e_t\|
\le
\rho\bigl(\|e_{t-1}\| + \|r_t\|\bigr).
\]
Under the uniform boundedness assumption, i.e., $\|r_t\|\le  r$ for all $t\ge 1$, we obtain the recurrence
\[
\|e_t\| \le \rho\|e_{t-1}\| + \rho r.
\]
Unrolling this gives
\begin{align*}
\|e_t\|
&\le \rho\|e_{t-1}\|+\rho r \\
&\le \rho\bigl(\rho\|e_{t-2}\|+\rho r\bigr)+\rho r \\
&= \rho^2\|e_{t-2}\|+\rho r(\rho+1) \\
&\ \ \vdots \\
&\le \rho^t\|e_0\| + \rho r\sum_{j=0}^{t-1}\rho^j.
\end{align*}
Solving the geometric series further gives
\[
\|e_t\|
\le
\rho^t\|e_0\| + \frac{1-\rho^t}{1-\rho}\,\rho r.
\]
\end{proof}

\section{Chunkwise Parallelization of Q-Delta}

\subsection{WY representation}\label{app:wy_representation}

Here we drive $P^r$ and $G^r$ in terms of
$\gamma$, $w^i$, and $u^i$.
For clarity, we drop the chunk index $[t]$ and write
$k_i := k_{t_i}$, $x_i := x_{t_i}$, $v_i := v_{t_i}$,
$\alpha_i := \alpha_{t_i}$, $\beta_i := \beta_{t_i}$, $w^i := w_{[t]}^i$, $u^i := u_{[t]}^i$.
Define
\[
P_i := I - \beta_i x_i k_i^\top,
\qquad
\gamma^r := \prod_{j=1}^r \alpha_j,
\quad (\gamma^0 := 1).
\]
Recall that 
$F^r = \prod_{i=1}^r \alpha_i P_i$ in Eq.~\eqref{eq:qdelta_chunk_expand}, which gives
\[
F^r
=
\Bigl(\prod_{i=1}^r \alpha_i\Bigr)\Bigl(\prod_{i=1}^r P_i\Bigr)
=
\gamma^r P^r,
\qquad
P^r := \prod_{i=1}^r P_i .
\]
Assume inductively that
\[
P^{r-1} = I - \sum_{i=1}^{r-1} w^i k_i^\top .
\]
Multiplying by $P_r$ gives
\begin{align}
P^r
&= P^{r-1} P_r \nonumber\\
&= \Bigl(I - \sum_{i=1}^{r-1} w^i k_i^\top\Bigr)
    \Bigl(I - \beta_r x_r k_r^\top\Bigr) \nonumber\\
&= I - \sum_{i=1}^{r-1} w^i k_i^\top
    - \beta_r x_r k_r^\top
    + \beta_r \sum_{i=1}^{r-1} w^i (k_i^\top x_r) k_r^\top \nonumber\\
&= I - \sum_{i=1}^{r-1} w^i k_i^\top
    - \beta_r\Bigl(x_r - \sum_{i=1}^{r-1} w^i (k_i^\top x_r)\Bigr) k_r^\top .
\end{align}
Define
\[
w^r := \beta_r\Bigl(x_r - \sum_{i=1}^{r-1} w^i (k_i^\top x_r)\Bigr),
\]
which then yields
\[
P^r = I - \sum_{i=1}^r w^i k_i^\top .
\]

Recall that the additive term in the chunkwise expansion (Eq.~\eqref{eq:qdelta_chunk_expand}) is
\[
G^r
=
\sum_{i=1}^r \beta_i v_i k_i^\top \prod_{j=i+1}^r \alpha_j P_j .
\]
Splitting the product,
\[
\prod_{j=i+1}^r \alpha_j P_j
=
\Bigl(\prod_{j=i+1}^r \alpha_j\Bigr)
\Bigl(\prod_{j=i+1}^r P_j\Bigr)
=
\frac{\gamma^r}{\gamma^i}\prod_{j=i+1}^r P_j ,
\]
and therefore
\[
G^r
=
\sum_{i=1}^r \frac{\gamma^r}{\gamma^i}\,
\beta_i v_i k_i^\top
\prod_{j=i+1}^r P_j .
\]
From above, $G^r$ satisfies the recursion
\[
G^r = \alpha_r G^{r-1} P_r + \beta_r v_r k_r^\top,
\qquad
G^0 = 0 .
\]
Assume inductively that
\[
G^{r-1} = \sum_{i=1}^{r-1} \frac{\gamma^{r-1}}{\gamma^i} \tilde{u}^i k_i^\top .
\]
Multiplying by $\alpha_r$ gives
\[
\alpha_r G^{r-1}
=
\sum_{i=1}^{r-1} \frac{\gamma^r}{\gamma^i} \tilde{u}^i k_i^\top ,
\]
since $\gamma^r = \alpha_r \gamma^{r-1}$.
Multiplying by $P_r$ and adding the new term yields
\begin{align}
G^r
&=
\Bigl(\sum_{i=1}^{r-1} \frac{\gamma^r}{\gamma^i} \tilde{u}^i k_i^\top\Bigr)
\Bigl(I - \beta_r x_r k_r^\top\Bigr)
+ \beta_r v_r k_r^\top \nonumber\\
&=
\sum_{i=1}^{r-1} \frac{\gamma^r}{\gamma^i} \tilde{u}^i k_i^\top
- \beta_r \sum_{i=1}^{r-1} \frac{\gamma^r}{\gamma^i} \tilde{u}^i (k_i^\top x_r) k_r^\top
+ \beta_r v_r k_r^\top \nonumber\\
&=
\sum_{i=1}^{r-1} \frac{\gamma^r}{\gamma^i} \tilde{u}^i k_i^\top
+ \beta_r\Bigl(v_r - \sum_{i=1}^{r-1} \frac{\gamma^r}{\gamma^i} \tilde{u}^i (k_i^\top x_r)\Bigr) k_r^\top .
\end{align}
To match the desired form
$G^r = \sum_{i=1}^r \frac{\gamma^r}{\gamma^i} \tilde{u}^i k_i^\top$,
we define
\[
u^r := \beta_r\Bigl(v_r - \sum_{i=1}^{r-1} \frac{\gamma^r}{\gamma^i} \tilde{u}^i (k_i^\top x_r)\Bigr).
\]
Therefore, we have following relations,
\[
P^r = I - \sum_{i=1}^r w^i k_i^\top,
\qquad
w^r = \beta_r\Bigl(x_r - \sum_{i=1}^{r-1} w^i (k_i^\top x_r)\Bigr),
\]
\[
G^r = \sum_{i=1}^r \frac{\gamma^r}{\gamma^i} \tilde{u}^i k_i^\top,
\qquad
u^r = \beta_r\Bigl(v_r - \sum_{i=1}^{r-1} \frac{\gamma^r}{\gamma^i} \tilde{u}^i (k_i^\top x_r)\Bigr),
\]
which completes the extended WY representation.

\subsection{UT transform}
\label{app:ut_transform}

The extended WY recursion defining $\tilde u^r$ is
\begin{equation}
\label{eq:tilde_u_rec}
\tilde u^r
=
\beta_r\left(
v_r - \sum_{i=1}^{r-1}\frac{\gamma^{r}}{\gamma^{i}}\tilde u^i (k_i^\top x_r)
\right).
\end{equation}
We now rewrite this in matrix form.
Let
\[
\widetilde U \in \mathbb{R}^{C\times d_v},\quad
V\in\mathbb{R}^{C\times d_v},\quad
K\in\mathbb{R}^{C\times d_k},\quad
X\in\mathbb{R}^{C\times d_k}
\]
stack rows $\tilde u^r$, $v_r$, $k_r$, $x_r$ respectively.
Let $B:=\text{diag}(\beta)\in\mathbb{R}^{C\times C}$.
Define $\Gamma\in\mathbb{R}^{C\times C}$ by
\[
\Gamma_{ri}:=
\begin{cases}
\gamma^{r}/\gamma^{i}, & r>i,\\
0, & r\le i,
\end{cases}
\qquad\text{(strictly lower triangular)}.
\]
Now define
\[
L_\gamma := \text{strictLower}\bigl(B(\Gamma \odot KX^\top)\bigr)\in\mathbb{R}^{C\times C},
\]
equivalently, for each row $r$,
\[
(L_\gamma \widetilde U)_{r,:}
=
\sum_{i<r}\beta_r\Gamma_{ri}(k_i^\top x_r)\tilde u^i{}^\top,
\]

Then we can rewrite Eq.~\eqref{eq:tilde_u_rec} for all $r$ as
\[
\widetilde U + L_\gamma \widetilde U = B V,
\]
hence
\begin{equation}
\label{eq:UT_gamma}
\widetilde U = (I+L_\gamma)^{-1} B V.
\end{equation}
Therefore, defining the UT transform matrix
\[
T_\gamma := (I+L_\gamma)^{-1}B,
\]
we obtain the matrix form
\[
\widetilde U = T_\gamma V.
\]


\end{document}